\newtheorem{theorem}{Theorem}
\newtheorem{proposition}{Proposition}
\newcommand{\Algo}[1]{\textsc{#1}}
\renewcommand{\vec}[1]{\boldsymbol{#1}}
\newcommand{\bx}{\vec{x}}
\newcommand{\by}{\vec{y}}
\newcommand{\bz}{\vec{z}}
\newcommand{\bw}{\vec{w}}
\newcommand{\bh}{\vec{h}}
\newcommand{\bX}{\vec{X}}
\newcommand{\bY}{\vec{Y}}
\newcommand{\calX}{\mathcal{X}}
\newcommand{\calY}{\mathcal{Y}}
\newcommand{\calC}{\mathcal{C}}
\newcommand{\calL}{\mathcal{L}}
\newcommand{\calR}{\mathcal{R}}
\newcommand{\heta}{\hat{\eta}}
\newcommand{\hy}{\hat{y}}
\newcommand{\prob}{\mathbf{P}}
\newcommand{\hprob}{\widehat\prob}
\newcommand{\reg}{\mathrm{reg}}
\newcommand{\loss}{L}
\newcommand{\mM}{\boldsymbol{M}}
\newcommand{\mW}{\boldsymbol{W}}
\newcommand{\datasix}[6]{
    \begin{tabular}{ll}
    \rule{0pt}{0pt} #1 \\
    \rule{0pt}{0pt} #2 \\
    \rule{0pt}{0pt} #3 \\
    \rule{0pt}{0pt} #4 \\
    \rule{0pt}{0pt} #5 \\
    \rule{0pt}{0pt} #6 \\
    \end{tabular}
}
\newcommand{\data}[3]{
    \begin{tabular}{ll}
    \rule{0pt}{0pt} #1 \\
    \rule{0pt}{0pt} #2 \\
    \rule{0pt}{0pt} #3 \\
    \end{tabular}
}
\newcommand{\datatwo}[3]{
    \begin{tabular}{cc}
    \rule{0pt}{0pt} #1 \\
    \rule{0pt}{0pt} #2 \\
    \end{tabular}
}
\newcommand{\databf}[3]{
    \textbf{
	\begin{tabular}{ll}
    \rule{0pt}{0pt} #1 \\
    \rule{0pt}{0pt} #2 \\
    \rule{0pt}{0pt} #3 \\
    \end{tabular}
	}
}
\newcommand{\assert}[1]{\llbracket #1 \rrbracket}
\newcommand{\given}{\, | \,}
\DeclareMathOperator*{\argmin}{\arg \min}
\newcommand{\sectionBefore}{-0pt}
\newcommand{\sectionAfter}{-0pt}
\newcommand{\tableBefore}{-0pt}
\newcommand{\tableAfter}{-0pt}
\title{A no-regret generalization of hierarchical softmax to extreme multi-label classification}
\author{
   Marek Wydmuch\\
   Institute of Computing Science\\
   Poznan University of Technology, Poland\\
   \texttt{mwydmuch@cs.put.poznan.pl}\\
   \And
   Kalina Jasinska\\
   Institute of Computing Science\\
   Poznan University of Technology, Poland\\
   \texttt{kjasinska@cs.put.poznan.pl}\\
   \And
   Mikhail Kuznetsov\\
   Yahoo! Research\\
   New York, USA\\
   \texttt{kuznetsov@oath.com}\\
   \And
   R\'{o}bert Busa-Fekete \\
   Yahoo! Research\\
   New York, USA\\
   \texttt{busafekete@oath.com}\\
   \And
   Krzysztof Dembczy\'nski\\
   Institute of Computing Science\\
   Poznan University of Technology, Poland\\
   \texttt{kdembczynski@cs.put.poznan.pl}\\
} 
\begin{document}
\maketitle


\begin{abstract}
Extreme multi-label classification (XMLC) is a problem of tagging an instance with a small subset of relevant labels chosen from an extremely large pool of possible labels. Large label spaces can be efficiently handled by organizing labels as a tree, like in the hierarchical softmax (\Algo{HSM}) approach commonly used for multi-class problems. In this paper, we investigate probabilistic label trees (\Algo{PLT}s) that have been recently devised for tackling XMLC problems. 
We show that \Algo{PLT}s are a \emph{no-regret} multi-label generalization of \Algo{HSM} when precision@$k$ is used as a model evaluation metric. 
Critically, we prove that \emph{pick-one-label} heuristic---a reduction technique from multi-label to multi-class that is routinely used along with HSM---is not consistent in general. 
We also show that our implementation of \Algo{PLT}s, referred to as \Algo{extremeText} (\Algo{XT}), obtains significantly better results than \Algo{HSM} with the pick-one-label heuristic and \Algo{XML-CNN}, a deep network specifically designed for XMLC problems. Moreover, \Algo{XT} is competitive to many state-of-the-art approaches in terms of statistical performance, model size and prediction time which makes it amenable to deploy in an online system. 
%
\end{abstract}

\section{Introduction}
\label{sec:introduction}

In several machine learning applications, the label space can be enormous, containing even millions of different classes. Learning problems of this scale are often referred to as \emph{extreme classification}.  
To name a few examples of such problems, consider image and video annotation for multimedia search~\citep{Deng_et_al_2011}, tagging of text documents for categorization of Wikipedia articles~\citep{Dekel_Shamir_2010}, recommendation of bid words for online ads~\citep{Prabhu_Varma_2014}, or prediction of the next word in a sentence~\citep{Mikolov_et_al_2013}. 

To tackle extreme classification 
problems in an efficient way, one can organize the labels into a tree. A prominent example of such label tree model is \emph{hierarchical softmax} (\Algo{HSM})~\citep{Morin_Bengio_2005}, often used with neural networks to speed up computations in multi-class classification with
large output spaces. For example, it is commonly applied in natural language processing problems such as language modeling~\citep{Mikolov_et_al_2013}. To adapt \Algo{HSM} to \emph{extreme multi-label classification} (XMLC), several very popular tools, such as \Algo{fastText}~\citep{Joulin_et_al_2016} and \Algo{Learned Tree}~\citep{Jernite_et_al_2017}, apply the \emph{pick-one-label} heuristic. As the name suggests, this heuristic randomly picks one of the labels from a multi-label training example and treats the example as a multi-class one.

%

In this work, we exhaustively investigate the multi-label extensions of \Algo{HSM}. First, we show that the pick-one-label strategy does not lead to a proper generalization of HSM for multi-label setting.  
More precisely, we prove that using the pick-one-label reduction one cannot expect any multi-class learner to achieve zero regret in terms of marginal probability estimation and maximization of precision@$k$. 
As a remedy to this issue, we are going to revisit \emph{probabilistic label trees} (\Algo{PLT}s)~\citep{Jasinska_et_al_2016} that have been recently introduced for solving XMLC problems. 
We show that \Algo{PLT}s are a theoretically motivated generalization of \Algo{HSM} to multi-label classification, that is, 1) \Algo{PLT}s and \Algo{HSM} are identical in multi-class case, and 2) a \Algo{PLT} model can get \emph{zero regret} (i.e., it is \emph{consistent}) in terms of marginal probability estimation and precision@$k$ in the multi-label setting. 
%


Beside our theoretical findings, we provide an efficient implementation of \Algo{PLT}s, referred to as \Algo{XT}, that we build upon \Algo{fastText}. The comprehensive empirical evaluation shows that it gets significantly better results than the original \Algo{fastText}, \Algo{Learned Tree}, and \Algo{XML-CNN}, a specifically designed deep network for XMLC problems.
\Algo{XT} also achieves competitive results to other state-of-the-art approaches, being very efficient in model size and prediction time, particularly in the online setting.

This paper is organized as follows. First we discuss the related work and situate our approach in the context. In Section~\ref{sec:problem_statement} we formally state the XMLC problem and present some useful theoretical insights. Next, we briefly introduce the \Algo{HSM} approach, and in Section \ref{sec:pick-one-label} we show theoretical results concerning the pick-one-label heuristic. Section~\ref{sec:plt} formally introduces \Algo{PLT}s and presents the main theoretical results concerning them and their
relation to \Algo{HSM}. Section~\ref{sec:plt-tagging} provides implementation details of \Algo{PLT}s. 
The experimental results are presented in
Section~\ref{sec:empirical_results}. 
Finally we make concluding remarks. 

\vspace{\sectionBefore}
\section{Related work}
\label{sec:related_work}
\vspace{\sectionAfter}


Historically, problems with a large number of labels were usually
solved by nearest neighbor or decision tree methods. Some of today's
algorithms are still based on these classical approaches,
significantly extending them by a number of new tricks. If the label
space is of moderate size (like a few thousands of labels) then an
independent model can be trained for each class. This is the so-called
\Algo{1-vs-All} approach. Unfortunately, it scales linearly with the
number of labels, which is too costly for many applications. The
extreme classification algorithms try to improve over this approach
by following different paradigms such as
sparsity of labels~\citep{Yen_et_al_2017,Babbar_Scholkopf_2017}, low-rank
approximation~\citep{Mineiro_Karampatziakis_2015,Yu_et_al_2014,Bhatia_et_al_2015},
tree-based
search~\citep{Prabhu_Varma_2014,Choromanska_Langford_2015}, or label
filtering~\citep{Vijayanarasimhan_et_al_2014,Shrivastava_Li_2015,Niculescu-Mizil_Abbasnejad_2017}.

In this paper we focus on tree-based algorithms, therefore we discuss them here in more detail. 
There are two distinct types of these algorithms: decision trees and label trees. 
The former type follows the idea of classical decision trees. 
However, the direct use of the classic algorithms can be very costly~\citep{Agrawal_et_al_2013}. 
Therefore, the \Algo{FastXML} algorithm~\citep{Prabhu_Varma_2014} tackles the problem in a slightly different way. 
It uses sparse linear classifiers in internal tree nodes to split the feature space. Each linear classifier is trained on two classes that are formed in a random way first and then reshaped by optimizing the normalized discounted cumulative gain. 
To improve the overall accuracy \Algo{FastXML} uses an ensemble of trees. 
This algorithm, like many other decision tree methods, works in a batch mode. 
\citet{Choromanska_Langford_2015} have succeeded to introduce a fully online decision tree algorithm 
that also uses linear classifiers in internal nodes of the tree.

In label trees each label corresponds to one and only one path from the root to a leaf. 
Besides \Algo{PLT}s and \Algo{HSM}, there exist several other instances of this approach, 
for example, filter trees~\citep{Beygelzimer_et_al_2009a,Li_Lin_2014} 
or label embedding trees~\citep{Bengio_et_al_2010}. 
It is also worth to underline that algorithms similar to \Algo{HSM} 
have been introduced independently in many different research fields, 
such as nested dichotomies~\citep{Fox_1997} in statistics, 
conditional probability estimation trees~\citep{Beygelzimer_et_al_2009b} in multi-class classification, 
multi-stage classifiers~\citep{Kurzynski_1988} in pattern recognition, 
and probabilistic classifier chains~\citep{Dembczynski_et_al_2010c} in multi-label classification under the subset 0/1 loss. 
All these methods have been jointly analyzed in~\citep{Dembczynski_et_al_2016}.

A still open problem in label tree approaches is the tree structure learning. 
\Algo{fastText}~\citep{Joulin_et_al_2016} uses \Algo{HSM} with a Huffman tree built on the label frequencies. 
\citet{Jernite_et_al_2017} have introduced a new algorithm, called \Algo{Learned Tree}, which combines \Algo{HSM} 
with a specific hierarchical clustering that reassigns labels to paths in the tree in a semi-online manner. \citet{Prabhu_et_al_2018} follows another approach in which a model similar to \Algo{PLT}s is trained in a batch mode and a tree is built by using recursively balanced k-means over the label profiles. In Section~\ref{sec:plt-tagging} we discuss this approach in more detail.

The \Algo{HSM} model is often used as an output layer in neural networks. The \Algo{fastText} implementation can also be viewed as a shallow architecture with one hidden layer that represents instances as averaged feature (i.e., word) vectors. 
Another neural network-based model designed for XMLC has been introduced in~\citep{Liu_et_al_2017}. This model, referred to as \Algo{XML-CNN}, uses a complex convolutional deep network with a narrow last layer to make it work with large output spaces.  As we show in the experimental part, this quite expensive architecture gets inferior results in comparison to our \Algo{PLT}s built upon \Algo{fastText}.  

%

\vspace{\sectionBefore}
\section{Problem statement}
\label{sec:problem_statement}
\vspace{\sectionAfter}

Let $\calX$ denote an instance space, and let $\calL = \{1, \ldots, m\}$ be a finite set of $m$ class labels. 
We assume that an instance $\bx \in \calX$ is associated with a subset of
labels $\calL_{\bx} \in 2^\calL$ (the subset can be empty); this subset is often called a set of relevant labels, while the complement
$\calL \backslash \calL_{\bx}$ is considered as irrelevant for $\bx$. We assume $m$ to be a large number (e.g., $\ge 10^5$), but the size of the set of relevant labels $\calL_{\bx}$ is much smaller than $m$, i.e., $|\calL_{\bx}| \ll m$. We identify a set $\calL_{\bx}$ of relevant labels with a binary (sparse)
vector $\by = (y_1,y_2, \ldots, y_m)$, in which $y_j = 1 \Leftrightarrow j \in \calL_{\bx}$. By $\calY = \{0, 1\}^m$ we denote a set of all possible label vectors.
We assume that observations $(\bx, \by)$ are generated independently and identically according to the
probability distribution $\prob(\bX = \bx,\bY = \by)$ (denoted later by $\prob(\bx, \by)$) defined on $\calX \times \calY$. 

The problem of XMLC can be defined as finding a \emph{classifier} $\bh(\bx) = (h_1(\bx), h_2(\bx),\ldots, h_m(\bx))$, 
which in general can be defined as a mapping $\calX \rightarrow \calR^m$, that minimizes the \emph{expected loss} (or \emph{risk}):  
$$
\loss_\ell(\bh) = \mathbb{E}_{(\bx,\by) \sim \prob(\bx,\by)} (\ell(\by, \bh(\bx))\,,
$$
where $\ell(\by, \hat{\by})$ is the  (\emph{task}) \emph{loss}.
The optimal classifier,  the so-called \emph{Bayes classifier},  for a given loss function $\ell$ is:
$$
\bh^*_\ell = \argmin_{\bh}  \loss_\ell(\bh) \,.
$$
The \emph{regret} of a classifier $\bh$ with respect to $\ell$ is defined as:
 $$
\reg_\ell(\bh) = \loss_\ell(\bh) - \loss_\ell(\bh_{\ell}^*) = \loss_\ell(\bh) - \loss_\ell^* \,.
$$
The regret quantifies the suboptimality of $\bh$ compared to the optimal classifier $\bh^*$. The goal could be then defined as finding $\bh$ with a small regret, ideally equal to zero.

In the following, we aim at estimating the marginal probabilities $\eta_j(\bx) = \prob(y_j = 1 \given \bx)$. 
%
As we will show below, marginal probabilities are a key element to optimally solve extreme classification for many performance measures, like Hamming loss, macro-F measure, and precision@$k$. 
To obtain the marginal probability estimates one can use the label-wise log loss as a surrogate:
$$
\ell_{\log}(\by, \bh(\bx))  = \sum_{j=1}^m \ell_{\log}(y_j, h_j(\bx)) = \sum_{j=1}^m  \left ( y_j \log(h_j(\bx)) + (1-y_j) \log(1-h_j(\bx)) \right) \,.
$$
Then the expected label-wise log loss for a single $\bx$ (i.e., the so-called \emph{conditional risk}) is:
$$
\mathbb{E}_{\by} \ell_{\log}(\by, \bh(\bx)) =  \sum_{j=1}^m \mathbb{E}_{\by}{\ell_{\log}(y_j, h_j(\bx))} = \sum_{j=1}^m \loss_{\log}(h_j(\bx) \given \bx)\,. 
$$
Therefore, it is easy to see that the pointwise optimal prediction for the $j$-th label is given by:
$$
h_j^*(\bx)  = \argmin_h \loss_{\log}(h_j(\bx)\given \bx) = \eta_j(\bx) \,.
$$

As shown in~\citep{Dembczynski_et_al_2010c}, the Hamming loss is minimized by 
$h_j^*(\bx) = \assert{\eta_j(\bx) > 0.5} \,.$
For the macro F-measure it suffices in turn to find an optimal threshold on marginal probabilities for each label separately as proven in~\citep{Ye_et_al_2012,Narasimhan_et_al_2014,Jasinska_et_al_2016,Dembczynski_et_al_2017}. In the following, we will show a similar result for precision@$k$ which has become a standard measure in extreme classification (although it is also often criticized, as it favors the most frequent labels). 
%

Precision@$k$ can be formally defined as:
\begin{equation}
\mathrm{precision}@k(\by, \bx, \bh) = \frac{1}{k} \sum_{j \in \hat \calY_k} \assert{y_j = 1},
\label{eqn:precision-at-k}
\end{equation}
where $\hat \calY_k$ is a set of $k$ labels predicted by $\bh$ for $\bx$.
To be consistent with the former discussion, let us define a loss function for precision@$k$ as $\ell_{p@k} = 1 - \mathrm{precision}@k$. The conditional risk is then:\footnote{The derivation is given in Appendix~\ref{app:prec@k}.}
$$
\loss_{p@k}(\bh \given \bx) = \mathbb{E}_{\by} \ell_{p@k}(\by,\bx, \bh) = 1 - \frac{1}{k} \sum_{j \in \hat \calY_k} \eta_j(\bx) \,.
$$
The above result shows that the optimal strategy for precision@$k$ is to predict $k$ labels
with the highest marginal probabilities $\eta_j(\bx)$.
As the main theoretical result given in this paper is a regret bound for precision@$k$, let us define here the conditional regret for this metric:
$$
\reg_{p@k} (\bh \given \bx) = \frac{1}{k}\sum_{i \in \calY_k} \eta_i(\bx) - \frac{1}{k}\sum_{j \in \hat \calY_k} \eta_j(\bx)\,,
$$
where $\calY_k$ is a set containing the top $k$ labels with respect to the true marginal probabilities. 

From the above results, we see that estimation of marginal probabilities is crucial for XMLC problems. To obtain these probabilities we can use the vanilla \Algo{1-vs-All} approach trained with the label-wise log loss. Unfortunately, \Algo{1-vs-All} is too costly in the extreme setting. In the following sections, we discuss an alternative approach based on the label trees that estimates the marginal probabilities with the competitive accuracy, but in a much more efficient way.

\vspace{\sectionBefore}
\section{Hierarchical softmax approaches}
\label{sec:hsm}
\vspace{\sectionAfter}

Hierarchical softmax (\Algo{HSM}) is designed for multi-class classification. Using our notation, for multi-class problems we have $\sum_{i=1}^m y_i = 1$, i.e., there is one and only one label assigned to an instance $(\bx, \by)$. The marginal probabilities $\eta_j(\bx)$ in this case sum up to 1. 


The \Algo{HSM} classifier $\bh(\bx)$ takes a form of a label tree. 
We encode all labels from $\calL$ using a prefix code. Any such code can be given in a form of a tree in which a path from the root to a leaf node corresponds to a code word. Under the coding, each label $y_j=1$ is uniquely represented by a code word $\bz = (z_1, \ldots, z_l) \in \calC$, where $l$ is the length of the code word and $\calC$ is a set of all code words. 
For $z_i \in \{0,1\}$, the code and the label tree are binary. In general, the code alphabet can contain more than two symbols. Furthermore, $z_i$s can take values from different sets of symbols depending on the previous values in the code word. In other words, the code can result with nodes of a different arity even in the same tree, like in~\citep{Grave_et_al_2017} and \citep{Prabhu_et_al_2018}. We will briefly discuss different tree structures in Section~\ref{sec:plt-tagging}. 

A tree node can be uniquely identified by the partial code word $\bz^i = (z_1, \ldots, z_i)$. We denote the root node by $\bz^0$, which is an empty vector (without any elements). The probability of a given label is determined by a sequence of decisions made by node classifiers that predict subsequent values of the code word. 
By using the chain rule of probability, we obtain:
$$
\eta_j(\bx) = \prob(y_j = 1 \given \bx) = \prob(\bz \given \bx) = \prod_{i=1}^l \prob(z_i \given \bz^{i-1}, \bx) \,.
$$

By using logistic loss and a linear model $f_{\bz^i}(\bx)$ in each node $\bz^i$ for estimating $\prob(z_i \given \bz^{i-1}, \bx)$, we obtain the popular formulation of \Algo{HSM}. 
Let us notice that since we deal here with a multi-class distribution, we have that:
\begin{equation}
\sum_{c} \prob(z_i = c \given \bz^{i-1}, \bx) = 1 \,.    
\label{eqn:normalization-hsm}
\end{equation}
Because of this normalization, we can assume that a multi-class (or binary in the case of binary trees) classifier is situated in all internal nodes and there are no classifiers in the leaves of the tree. Alternatively, we can assume that each node, except the root, is associated with a binary classifier that estimates $\prob(z_i = c \given \bz^{i-1}, \bx)$, but then the additional normalization (\ref{eqn:normalization-hsm}) has to be performed. This alternative formulation is important for the multi-label extension of HSM  discussed in Section~\ref{sec:plt}.
In either way, learning of the node classifiers can be performed simultaneously as independent tasks. 

Note that estimate $\heta_j(\bx)$ of the probability of label $j$ can be easily obtained by traversing the tree along the path indicated by the code of the label. Unfortunately, the task of predicting top $k$ labels is more involved as it requires searching over the tree. Popular solutions are beam search~\citep{Kumar_et_al_2013,Prabhu_et_al_2018}, uniform-cost search~\citep{Joulin_et_al_2016}, and its approximate variant~\citep{Dembczynski_et_al_2012c,Dembczynski_et_al_2016}.

\vspace{\sectionBefore}
\section{Suboptimality of HSM for multi-label classification}
\label{sec:pick-one-label}
\vspace{\sectionAfter}

To deal with multi-label problems, some popular tools, such as \Algo{fastText}~\citep{Joulin_et_al_2016} and its extension \Algo{Learned Tree}~\citep{Jernite_et_al_2017}, apply \Algo{HSM} with the pick-one-label heuristic which randomly picks one of the positive labels from a given training instance. The resulting instance is then treated as a multi-class instance. During prediction, the heuristic returns a multi-class distribution and the $k$ most probable labels. 
We show below that this specific reduction of the multi-label problem to multi-class classification is not consistent in general.

Since the probability of picking a label $j$ from $\by$ is equal to $y_j/\sum_{j'=1}^m y_{j'}$, the pick-one-label heuristic maps the multi-label distribution to a multi-class distribution in the following way:
\begin{equation}
\eta_j'(\bx) = \prob'(y_j = 1 \given \bx) = \sum_{\by \in \calY}  \frac{y_j}{\sum_{j'=1}^m y_{j'}}\prob(\by \given \bx)
\label{eq:heuristic}
\end{equation}
It can be easily checked that the resulting $\eta_j'(\bx)$ form a multi-class distribution as the probabilities sum up to 1. 
It is obvious that that the heuristic changes the marginal probabilities of labels, unless the initial distribution is multi-class. Therefore this method cannot lead to consistent classifiers in terms of estimating $\eta_j(\bx)$. As we show below, it is also not consistent for precision@$k$ in general. 
\begin{proposition}
A classifier $\bh$ such that $h_j(\bx) = \eta_j'(\bx)$ for all $j \in \{1, \dots ,m\}$ has in general a non-zero regret in terms of precision@$k$.
\end{proposition}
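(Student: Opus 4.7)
My plan is to prove the proposition by exhibiting a simple distribution on which the top-$k$ ranking induced by $\eta_j'(\bx)$ disagrees with the top-$k$ ranking induced by the true marginals $\eta_j(\bx)$, and then invoking the conditional regret formula for precision@$k$ derived earlier in the excerpt.

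Concretely, I would fix $k=1$ and work with $m=3$ labels on a single instance $\bx$ whose conditional distribution $\prob(\by\given\bx)$ is supported on just two label vectors: $\by=(1,1,0)$ with probability $p$ and $\by=(0,0,1)$ with probability $1-p$, where $p$ is chosen in the open interval $(1/2,\,2/3)$. First, I would read off the true marginals $\eta_1(\bx)=\eta_2(\bx)=p$ and $\eta_3(\bx)=1-p$, so that the precision@$1$-optimal set $\calY_1$ selects label $1$ (or $2$), which is a standard consequence of the formula $\loss_{p@k}(\bh\given\bx)=1-\tfrac{1}{k}\sum_{j\in\hat\calY_k}\eta_j(\bx)$ stated above.

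Next I would plug the same distribution into the pick-one-label transformation (\ref{eq:heuristic}) to get $\eta_1'(\bx)=\eta_2'(\bx)=p/2$ and $\eta_3'(\bx)=1-p$. For the chosen range of $p$ we have $1-p > p/2$, so the classifier $\bh$ with $h_j(\bx)=\eta_j'(\bx)$ puts label $3$ on top and therefore returns $\hat\calY_1=\{3\}$. Applying the conditional regret expression $\reg_{p@k}(\bh\given\bx)=\tfrac{1}{k}\sum_{i\in\calY_k}\eta_i(\bx)-\tfrac{1}{k}\sum_{j\in\hat\calY_k}\eta_j(\bx)$ yields $\reg_{p@1}(\bh\given\bx)=p-(1-p)=2p-1>0$. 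Since the conditional regret at a single $\bx$ is strictly positive, the unconditional regret $\reg_{p@k}(\bh)$ is also strictly positive (take a distribution on $\calX$ that places positive mass on this $\bx$), which establishes the claim.

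There is no real technical obstacle here; the only care needed is to verify that the chosen range of $p$ simultaneously makes the true top label $1$ (requiring $p>1-p$) and the heuristic's top label $3$ (requiring $1-p>p/2$), and to note that the argument generalizes: whenever a label $j$ co-occurs with other labels its $\eta_j'$ is strictly smaller than $\eta_j$, so any label that tends to appear alongside many others is systematically demoted by the heuristic, which is the underlying source of inconsistency. The example above is simply the minimal witness of this phenomenon.
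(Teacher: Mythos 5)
Your proof is correct and follows essentially the same route as the paper: both exhibit a small counterexample ($m=3$, $k=1$) in which a label that co-occurs with others is demoted by the pick-one-label transformation below a singleton label, flipping the top-1 prediction, and then read off a strictly positive regret from the conditional precision@$k$ formula. The paper simply uses fixed numerical probabilities ($0.1, 0.5, 0.4$) where you use a parameterized family $p \in (1/2, 2/3)$, which changes nothing essential.
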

\begin{proof}
We prove the proposition by giving a simple counterexample. Consider the following conditional distribution for some $\bx$: 
$$
\prob(\by = (1,0,0) \given \bx) = 0.1 \,,\quad \prob(\by = (1,1,0) \given \bx) = 0.5\,, \quad \prob(\by = (0,0,1) \given \bx) = 0.4\,.
$$
The optimal top 1 prediction for this example is obviously label $1$, since the marginal probabilities are $\eta_1(\bx) = 0.6, \eta_2(\bx) = 0.5,  \eta_3(\bx) = 0.4$. However, the pick-one-label heuristic will transform the original distribution to the following one: $\eta_1'(\bx) = 0.35, \eta_2'(\bx) = 0.25,  \eta_3'(\bx) = 0.4$. The predicted top label will be then label $3$, giving the regret of 0.2 for precision@$1$.  
\end{proof}
%

The proposition shows that the heuristic is in general inconsistent for precision@$k$.
Interestingly, the situation changes when the labels are conditionally independent, i.e., 
$
\prob(\by \given \bx) = \prod_{j=1}^m \prob(y_i \given \bx)\,. 
$
\begin{restatable}{proposition}{hsmindependent}
\label{prop:hsm-independent}
Given conditionally independent labels, a classifier $\bh$ such that $h_j(\bx) = \eta_j'(\bx)$ for all $j \in \{1, \dots ,m\}$ has zero regret in terms of the precision@$k$ loss.
\end{restatable}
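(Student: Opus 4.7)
The plan is to exploit conditional independence to rewrite $\eta'_j(\bx)$ as an expectation involving independent Bernoulli random variables, and then use a monotone coupling to show that the order of the marginals is preserved by the pick-one-label transformation. Once the orderings agree, the top-$k$ sets agree and the regret vanishes.

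Concretely, I would fix $\bx$ and let $Y_1,\ldots,Y_m$ be independent Bernoulli random variables with $Y_i \sim \mathrm{Bernoulli}(\eta_i(\bx))$, so that under conditional independence their joint distribution is exactly $\prob(\by \given \bx)$. Writing $S = \sum_{i=1}^m Y_i$, equation~(\ref{eq:heuristic}) becomes
\begin{equation*}
\eta'_j(\bx) \;=\; \mathbb{E}\!\left[\frac{Y_j}{S} \,\bigm|\, S > 0\right] \prob(S>0) \;=\; \mathbb{E}\!\left[\frac{Y_j}{S}\,\mathbf{1}[S>0]\right],
\end{equation*}
with the standard convention $0/0 = 0$. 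The crucial comparison is then between $\eta'_j(\bx)$ and $\eta'_k(\bx)$ for labels $j,k$ with $\eta_j(\bx) > \eta_k(\bx)$.

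Here I would use a standard monotone coupling: draw a single $U \sim \mathrm{Uniform}[0,1]$ and set $Y_j = \mathbf{1}[U < \eta_j(\bx)]$, $Y_k = \mathbf{1}[U < \eta_k(\bx)]$, while the remaining $Y_i$'s are drawn independently of $U$ with their usual Bernoulli marginals. This preserves all marginal distributions and hence preserves $\eta'_j(\bx)$ and $\eta'_k(\bx)$, while guaranteeing $Y_j \ge Y_k$ pointwise. Therefore $(Y_j - Y_k)/S \ge 0$ on $\{S > 0\}$ (with the $0/0$ convention). Moreover, on the event $\{Y_j = 1, Y_k = 0\}$ — which has probability $\eta_j(\bx) - \eta_k(\bx) > 0$ — we automatically have $S \ge 1$ and $(Y_j - Y_k)/S = 1/S > 0$, so taking expectations gives $\eta'_j(\bx) - \eta'_k(\bx) > 0$. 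Thus $\eta_j(\bx) > \eta_k(\bx)$ implies $\eta'_j(\bx) > \eta'_k(\bx)$, and by symmetry $\eta_j(\bx) = \eta_k(\bx)$ implies $\eta'_j(\bx) = \eta'_k(\bx)$, so the two rankings coincide.

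From the characterization of the precision@$k$ optimal classifier derived earlier in the paper — predict the $k$ labels with the largest $\eta_j(\bx)$ — and the fact that the top-$k$ set of $\eta'(\bx)$ equals a top-$k$ set of $\eta(\bx)$ (with any boundary ties being arbitrary), it follows that $\mathrm{reg}_{p@k}(\bh \given \bx) = 0$ pointwise, and hence in expectation. The only real subtlety is handling ties at the $k$-th position and the degenerate case $S = 0$: the tie case is harmless because either ordering yields the same conditional risk, and the $S = 0$ case contributes identically to every $\eta'_j$ (namely nothing) so it does not affect the comparison. I do not expect any serious obstacle beyond cleanly stating the coupling and the $0/0$ convention.
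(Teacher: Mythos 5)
The overall route---show that under conditional independence the pick-one-label transformation preserves the ordering of the marginal probabilities, then invoke the characterization of the precision@$k$ optimum as the top-$k$ marginals---is exactly the paper's, but your key transfer step has a genuine flaw. You justify moving from the comonotone coupling back to the true distribution by asserting that the coupling ``preserves all marginal distributions and hence preserves $\eta_i'(\bx)$ and $\eta_j'(\bx)$.'' That inference is false: $\eta_i'(\bx)=\E\bigl[(Y_i/S)\,\mathbf{1}[S>0]\bigr]$ is a functional of the \emph{joint} law of $(Y_1,\dots,Y_m)$ through $S$, not of the marginals alone (this is precisely why the heuristic is inconsistent in general). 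Concretely, for $m=2$ with $\eta_i(\bx)=0.6$, $\eta_j(\bx)=0.4$, independence gives $\eta_i'(\bx)=\eta_i-\eta_i\eta_j/2=0.48$, while your comonotone coupling gives $\eta_i'(\bx)=\eta_i-\eta_j/2=0.40$; the individual values are not preserved, so as written the positivity you establish under the coupled law says nothing about the actual difference $\eta_i'(\bx)-\eta_j'(\bx)$.

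What rescues the argument---and is the piece you must supply---is that the \emph{difference} is invariant to the coupling of $(Y_i,Y_j)$ as long as this pair stays independent of the remaining labels: on the cells $(Y_i,Y_j)\in\{(0,0),(1,1)\}$ the integrand $(Y_i-Y_j)/S$ vanishes, so with $R=\sum_{l\neq i,j}Y_l$ and $R$ independent of $(Y_i,Y_j)$ one gets
$\eta_i'(\bx)-\eta_j'(\bx)=\bigl(\prob(Y_i=1,Y_j=0)-\prob(Y_i=0,Y_j=1)\bigr)\,\E\bigl[1/(1+R)\bigr]=\bigl(\eta_i(\bx)-\eta_j(\bx)\bigr)\,\E\bigl[1/(1+R)\bigr]$,
which is strictly positive whenever $\eta_i(\bx)>\eta_j(\bx)$ since $\E[1/(1+R)]\geq 1/m>0$. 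Once you write this down, the coupling is superfluous: the identity holds directly under the original independent law, and it is exactly the paper's proof in probabilistic clothing---the paper partitions $\calY$ into the four cells $\mathcal{S}^{u,w}_{i,j}$, notes that $(0,0)$ contributes nothing and $(1,1)$ contributes equally to both sums, and pairs $(1,0)$ with $(0,1)$ by a bijection, with independence giving $\prob(\by'\given\bx)-\prob(\by''\given\bx)=q_{i,j}(\eta_i(\bx)-\eta_j(\bx))$. Your treatment of ties, of the $S=0$ event, and the final top-$k$ argument are fine once this step is repaired.
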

\begin{proof}
We show here only a sketch of the proof. The full proof is given in Appendix~\ref{app:hsm-independent}. To prove the theorem, it is enough to show that in the case of conditionally independent labels the pick-one-label heuristic does not change the order of marginal probabilities.
Let $y_i$ and $y_j$ be so that $\prob(y_i = 1 \given \bx) \ge \prob(y_j = 1 \given \bx) $. Then in the summation over all $\by$s in (\ref{eq:heuristic}), we are interested in four different subsets of $\calY$,
$S_{i,j}^{u,w}  =  \{\by\in \calY: y_i = u \land y_j = w\}$, where $u,w \in \{0,1\}$.
Remark that during mapping none of $\by \in S^{0,0}_{i,j}$ plays any role, and for each $\by \in S^{1,1}_{i,j}$, the value of 
$
y_t/(\sum_{t'=1}^m y_{t'}) \times \prob(\by \given \bx) 
$, for $t \in \{i,j\}$, 
is the same for both $y_i$ and $y_j$. Now, let $\by' \in S^{1,0}_{i,j}$ and $\by'' \in S^{0,1}_{i,j}$ be the same on all elements except the $i$-th and the $j$-th one. Then, because of the label independence and the assumption that $\prob(y_i = 1 \given \bx) \ge \prob(y_j = 1 \given \bx) $, we have $\prob(\by' \given \bx) \ge \prob(\by'' \given \bx)$. Therefore, after mapping we obtain $\eta_i'(\bx) \ge \eta_j'(\bx)$. 
Thus, for independent labels, the pick-one-label heuristic is consistent for precision@$k$.
\end{proof}


%
%
%


\vspace{\sectionBefore}
\section{Probabilistic label trees}
\label{sec:plt}
\vspace{\sectionAfter}

The section above has revealed that \Algo{HSM} cannot be properly adapted to multi-label problems by the pick-one-label heuristic. There is, however, a different way to generalize \Algo{HSM} to obtain no-regret estimates of marginal probabilities $\eta_j(\bx)$. 
The probabilistic label trees (\Algo{PLT}s)~\citep{Jasinska_et_al_2016} can be derived in the following way. Let us encode $y_j = 1$ by a slightly extended code $\bz = (1, z_1, \ldots, z_l)$ in comparison to \Algo{HSM}. The new code gets 1 at the zero position what corresponds to a question whether there exists at least one label assigned to the example. As before, each node is uniquely identified by a partial code $\bz^i$ which says that there is at least one positive label in a subtree rooted in that node. 
It can be easily shown by the chain rule of probability that the marginal probabilities can be expressed in the following way:
\begin{equation}
\eta_j(\bx) = \prob(\bz \given \bx) = \prod_{i = 0}^l \prob(z_i \given \bz^{i-1}, \bx)\,.
\label{eqn:plt}
\end{equation}
The difference to \Algo{HSM} is the probability $\prob(z_0 = 1 \given \bx)$ in the chain and a different normalization, i.e.:
\begin{equation}
\sum_c \prob(z_i = c \given \bz^{i-1}, \bx) \ge 1 \,.
\label{eqn:normalization-plt}
\end{equation}
Only for $z_0$ we have $\prob(z_0 = 1 \given \bx) + \prob(z_0 = 0 \given \bx) = 1$. Because of (\ref{eqn:normalization-plt}), 
the binary models that estimate $\prob(z_i = c \given \bz^{i-1}, \bx)$ (against $\prob(z_i \not = c \given \bz^{i-1}, \bx)$) are situated in all nodes of the tree (i.e., also in the leaves). The models can be trained independently as before for \Algo{HSM}. Only during prediction, one can re-calibrate the estimates when (\ref{eqn:normalization-plt}) is not satisfied, for example, by normalizing them to sum up to 1. 
It can be easily noticed that for a multi-class distribution, the resulting model of \Algo{PLT}s boils down to \Algo{HSM}, since $\prob(z_0 = 1 \given \bx)$ is always equal $1$, and in addition, normalization (\ref{eqn:normalization-plt}) will take the form of (\ref{eqn:normalization-hsm}).
In Appendix~\ref{app:pseudocode} we additionally present the pseudocode of training and predicting with \Algo{PLT}s.

Next, we show that the \Algo{PLT} model obeys strong theoretical guarantees. Let us first revise the result from~\citep{Jasinska_et_al_2016} that relates the absolute difference between the true and the estimated marginal probability of label~$j$, $|\eta_j(\bx) - \heta_j(\bx)|$, to the surrogate loss $\ell$ used to train node classifiers $f_{\bz^i}$. It is assumed here that $\ell$ is a strongly proper composite loss (e.g, logistic, exponential, or squared loss) characterized by a constant $\lambda$ (e.g. $\lambda = 4$ for logistic loss).\footnote{For more detailed introduction to strongly proper composite losses, we refer the reader to~\citep{Agarwal_2014}.}
\begin{theorem}\label{thm:log}
For any distribution $\prob$ and internal node classifiers $f_{\bz^i}$, the following holds:
$$
\left | \eta_j(\bx) - \heta_j(\bx) \right |  \le  \sum_{i=0}^l \prob(\bz^{i-1} \given \bx) \sqrt{ \frac{2}{\lambda}} \sqrt{\reg_\ell(f_{\bz^i} \given \bz^{i-1}, \bx)} \,,
$$
where $\reg_\ell(f_{\bz^i} \given \bz^{i-1}, \bx)$ is a binary classification regret for a strongly proper composite loss $\ell$ and $\lambda$ is a constant specific for loss $\ell$.
\end{theorem}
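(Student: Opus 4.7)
The plan is to combine a telescoping identity for differences of products with the pointwise calibration bound that characterizes strongly proper composite losses. Write $p_i = \prob(z_i \given \bz^{i-1},\bx)$ and $\hat p_i = \hat\prob(z_i \given \bz^{i-1},\bx)$ for the outputs of the node classifier $f_{\bz^i}$ (composed with its link function). By (\ref{eqn:plt}), $\eta_j(\bx) = \prod_{i=0}^l p_i$ and $\heta_j(\bx) = \prod_{i=0}^l \hat p_i$, so the problem reduces to bounding the difference of two products of numbers in $[0,1]$.

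First, I would apply the standard telescoping identity
$$
\prod_{i=0}^l p_i - \prod_{i=0}^l \hat p_i \;=\; \sum_{i=0}^l \Big(\prod_{k<i} p_k\Big)(p_i - \hat p_i)\Big(\prod_{k>i} \hat p_k\Big),
$$
take absolute values, and drop the trailing factor by using $\hat p_k \in [0,1]$. The crucial observation is that by the chain rule,
$$
\prod_{k<i} p_k \;=\; \prod_{k=0}^{i-1} \prob(z_k \given \bz^{k-1},\bx) \;=\; \prob(\bz^{i-1}\given \bx),
$$
which is exactly the factor appearing in the theorem. This yields
$$
\bigl|\eta_j(\bx)-\heta_j(\bx)\bigr| \;\le\; \sum_{i=0}^l \prob(\bz^{i-1}\given \bx)\,|p_i-\hat p_i|.
$$

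Second, I would invoke the defining property of a $\lambda$-strongly proper composite loss $\ell$: for binary classification, the pointwise regret satisfies
$$
\bigl(p_i - \hat p_i\bigr)^2 \;\le\; \frac{2}{\lambda}\,\reg_\ell\!\bigl(f_{\bz^i} \given \bz^{i-1},\bx\bigr),
$$
where the conditional regret on the right is taken with respect to the conditional distribution of $z_i$ given the event $\bz^{i-1}$ at the node and the input $\bx$. Taking square roots and substituting into the telescoping bound gives exactly the claimed inequality.

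The main obstacle, I expect, is bookkeeping rather than any deep inequality: one must argue carefully that the binary target used when training $f_{\bz^i}$ really has conditional mean $p_i = \prob(z_i \given \bz^{i-1},\bx)$, so that the strongly proper composite loss bound can be applied at every internal and leaf node with the same constant $\lambda$. A minor issue to dispatch is the boundary terms in the telescoping sum (empty products equal one) and the fact that $\prob(\bz^{-1}\given \bx)=1$, so that the first summand correctly accounts for the root probability $p_0$. Once these pieces are aligned, the remainder is a direct combination of the telescoping inequality, the identification of partial products with $\prob(\bz^{i-1}\given \bx)$, and the strongly proper composite loss calibration bound.
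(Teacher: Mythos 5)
Your proposal is correct and matches the paper's argument in substance: the paper derives the same intermediate bound $\left|\eta_j(\bx)-\heta_j(\bx)\right| \le \sum_{i=0}^{l}\prob(\bz^{i-1}\given\bx)\,|\prob(z_i\given\bz^{i-1},\bx)-\hprob(z_i\given\bz^{i-1},\bx)|$ via a recursive add-and-subtract/triangle-inequality step (using that probabilities lie in $[0,1]$), which is just your telescoping identity unrolled, and then applies the identical strongly proper composite loss calibration bound with constant $\sqrt{2/\lambda}$. So this is essentially the same proof, differing only in whether the product-difference decomposition is written in closed form or by recursion.
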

Due to filtering of the distribution imposed by the \Algo{PLT}, the regret $\reg_\ell(f_{\bz^i} \given \bz^{i-1}, \bx)$ of a classifier $f_{\bz^i}$ exists only for $\bx$ such that $\prob(\bz^{i-1} \given \bx) > 0$, therefore we condition the regret not only on $\bx$, but also on $\bz^{i-1}$.
The above result shows that the absolute error of estimating the marginal probability of label $j$ can be upper bounded by the regret of the node classifiers on the corresponding path from the root to a leaf. 
The proof of Theorem \ref{thm:log} is given in Appendix~\ref{app:prec@k}.
Moreover, for zero-regret (i.e., optimal) node classifiers we obtain an optimal multi-label classifier in terms of estimation of marginal probabilities $\eta_j(\bx)$. This result can be further extended for precision@$k$. 

\begin{theorem} \label{thm:prec}
For any distribution $\prob$ and classifier $\bh$ delivering estimates $\heta_j(\bx)$ of the marginal probabilities of labels, the following holds:
\begin{align*}
\reg_{p@k} (\bh \given \bx) = \frac{1}{k}\sum_{i \in \calY_k} \eta_i(\bx) - \frac{1}{k}\sum_{j \in \hat \calY_k} \eta_j(\bx) \le 2 \max_{l} \left | \eta_l(\bx) - \heta_l(\bx) \right |
\end{align*}
\end{theorem}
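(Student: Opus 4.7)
The plan is to bound the regret by a standard add-and-subtract argument that compares the true and estimated label scores on both the optimal top-$k$ set and the predicted top-$k$ set. Let $\varepsilon = \max_{l} |\eta_l(\bx) - \heta_l(\bx)|$, so that by definition $|\eta_l(\bx) - \heta_l(\bx)| \le \varepsilon$ for every label $l$. Let $\calY_k$ denote the top-$k$ labels under the true marginals $\eta_l(\bx)$ and let $\hat\calY_k$ denote the top-$k$ labels under the estimates $\heta_l(\bx)$ (the predicted set).

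First I would insert the estimated marginals in the middle of the regret expression:
$$
\sum_{i \in \calY_k}\eta_i(\bx) - \sum_{j \in \hat\calY_k}\eta_j(\bx) = \sum_{i \in \calY_k}\bigl(\eta_i(\bx)-\heta_i(\bx)\bigr) + \Bigl(\sum_{i \in \calY_k}\heta_i(\bx) - \sum_{j \in \hat\calY_k}\heta_j(\bx)\Bigr) + \sum_{j \in \hat\calY_k}\bigl(\heta_j(\bx)-\eta_j(\bx)\bigr).
$$
The second (middle) bracket is non-positive: because $\hat\calY_k$ is by construction a top-$k$ set with respect to the estimates $\heta_l(\bx)$, no other $k$-subset -- in particular $\calY_k$ -- can achieve a larger sum of $\heta_l(\bx)$ values. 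So this term can be dropped from an upper bound.

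Next I would control the two remaining sums by the uniform error $\varepsilon$. Each of them has exactly $k$ terms, and each term is bounded in absolute value by $\varepsilon$, hence
$$
\sum_{i \in \calY_k}\bigl(\eta_i(\bx)-\heta_i(\bx)\bigr) + \sum_{j \in \hat\calY_k}\bigl(\heta_j(\bx)-\eta_j(\bx)\bigr) \le k\varepsilon + k\varepsilon = 2k\varepsilon.
$$
Dividing by $k$ yields $\reg_{p@k}(\bh \given \bx) \le 2\varepsilon = 2\max_{l}|\eta_l(\bx) - \heta_l(\bx)|$, which is exactly the claim.

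There is no real obstacle here; the only subtle point to state carefully is why the middle bracket is non-positive, namely the optimality of $\hat\calY_k$ as a top-$k$ selector under $\heta$, which mirrors the earlier observation that the Bayes-optimal top-$k$ predictor for precision@$k$ picks the $k$ labels with largest marginal scores. Everything else is the triangle inequality applied term-by-term, so the bound $2\varepsilon$ is tight up to constants and matches the factor appearing in the statement.
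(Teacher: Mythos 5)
Your proposal is correct and follows essentially the same argument as the paper: the same add-and-subtract of $\frac{1}{k}\sum_{i \in \calY_k} \heta_i(\bx)$ and $\frac{1}{k}\sum_{j \in \hat \calY_k} \heta_j(\bx)$, the same observation that the middle term is non-positive because $\hat\calY_k$ maximizes the sum of the estimates, and the same term-by-term bound of the remaining sums by $k\max_l|\eta_l(\bx)-\heta_l(\bx)|$.
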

The proof is based on adding and subtracting the following terms $\frac{1}{k}\sum_{i \in \calY_k} \heta_i(\bx)$ and $\frac{1}{k}\sum_{j \in \hat \calY_k} \heta_j(\bx)$ to the regret (a detailed proof is given in Appendix~\ref{app:prec@k}).
By getting together both theorems we get an upper bound of the precision@$k$ regret expressed in terms of the regret of the node classifiers. Again, for the zero-regret node classifiers, we get optimal solution in terms of precision@$k$.

\vspace{\sectionBefore}
\section{Implementation details of PLTs}
\label{sec:plt-tagging}
\vspace{\sectionAfter}


Given the tree structure, the node classifiers of \Algo{PLT}s can be trained as logistic regression either in online~\citep{Jasinska_et_al_2016} or batch mode~\citep{Prabhu_et_al_2018}. Both training modes have their pros and cons, but the online implementation gives a possibility of learning more complex representation of input instances. The above cited  implementations are both based on sparse representation, given either in a form of a bag-of-words or its TF-IDF variant. We opt here for training a \Algo{PLT} in the online mode along with the dense representation. We build our implementation  upon \Algo{fastText} and refer to it as \Algo{XT} which stands for \Algo{extremeText}.\footnote{Implementation of \Algo{XT} is available at \url{https://github.com/mwydmuch/extremeText}.}
%
In this way, we succeeded to obtain a very powerful and compressed model. The small dense models are important for fast online prediction as they do not need too much resources. The sparse models, in turn, can be slow and expensive in terms of memory usage as they need to decompress the node models to work fast. 
Remark also that, in general, \Algo{PLT}s can be used as an output layer of any neural network architecture (also that one used in XML-CNN~\citep{Yen_et_al_2017}) to speed up training and prediction time. 

In contrast to the original implementation of \Algo{fastText}, we use L2 regularization for all parameters of the model. 
To obtain representation of input instances we do not compute simple averages of the feature vectors, but use weights proportional to the TF-IDF scores of features. The competitive results can be obtained with feature and instance vectors of size 500. 
If a node classification task contains only positive instances, we use a constant classifier predicting 1 without any training. The training of \Algo{PLT} in either mode, online or batch, can be easily parallelized as each node classifier can be trained in isolation from the other classifiers. In our current implementation, however, we follow the parallelization on the level of training and test instances as in original \Algo{fastText}.

Our implementation, because of the additional use of the L2 regularization, has more parameters than original~\Algo{fastText}. We have found, however, 
that our model is remarkably robust for the hyperparameter selection, since it achieves close to optimal performance for a large set of hyperparameters that is in the vicinity of the optimal one. 
Moreover, the optimal hyperparameters are close to each other across all datasets. 
We report more information about the hyperparameter selection in Appendix~\ref{sec:hyper}.

The tree structure of a \Algo{PLT} is a crucial modeling decision. The vanishing regret for probability estimates and precision@$k$ holds regardless of the tree structure (see Theorem \ref{thm:log} and \ref{thm:prec}), however, this theory requires the regret of the node classifiers also to vanish. In practice, we can only estimate the conditional probabilities in the nodes, therefore the tree structure does indeed matter as it affects the difficulty of the node learning problems. The original \Algo{PLT} paper~\citep{Jasinska_et_al_2016} uses simple complete trees with labels assigned to leaves according to their frequencies. Another option, routinely used in \Algo{HSM}~\citep{Joulin_et_al_2016}, is the Huffman tree built over the label frequencies. Such tree takes into account the computational complexity by putting the most frequent labels close to the root. 
This approach has been further extended to optimize GPU operations in~\citep{Grave_et_al_2017}. Unfortunately, it ignores the statistical properties of the tree structure. Furthermore, for multi-label case the Huffman tree is no longer optimal even in terms of computational cost as we show it 
in Appendix~\ref{App:OptComp}. There exist, however, other methods that focus on building a tree with high overall accuracy~\citep{Tagami_2017,Prabhu_et_al_2018}. In our work, we follow the later approach, which performs a simple top-down hierarchical clustering. Each label in this approach is represented by a profile vector being an average of the training vectors tagged by this label. Then the profile vectors are clustered using balanced k-means which divides the labels into two or more clusters with approximately the same size. This procedure is then repeated recursively until the clusters are smaller than a given value (for example, 100). The nodes of the resulting tree are then of different arities. The internal nodes up to the leaves' parent nodes have $k$ children, but the leaves' parent nodes are usually of higher arity. Thanks to this clustering, similar labels are close to each other in the tree. Moreover, the tree is balanced, so its depth is logarithmic in terms of the number of labels.

\vspace{\sectionBefore}
\section{Empirical results}
\label{sec:empirical_results}
\vspace{\sectionAfter}


We carried out three sets of experiments. In the first, we compare exhaustively the performance of \Algo{PLT}s and \Algo{HSM} on synthetic and benchmark data. Due to lack of space, the results are deferred to Appendix \ref{sec:empirical-synthetic} and \ref{sec:empirical-benchmark}. The results on synthetic data confirm our theoretical findings: the models are the same in the case of multi-class data, the performance of \Algo{HSM} and \Algo{PLT}s is on par using multi-label data with independent labels, and \Algo{PLT}s significantly outperform \Algo{HSM} on multi-label data with conditionally dependent labels. The results on the benchmark data clearly indicate the better performance of \Algo{PLT}s over \Algo{HSM}.


In the second experiment, we compare \Algo{XT}, the variant of \Algo{PLT}s discussed in the previous section, to the state-of-the-art algorithms on five benchmark datasets taken from XMLC repository,\footnote{Additional statistics of these datasets are also included in Appendix~\ref{app:benchmark_datasets}. Address of the XMLC repository: \url{http://manikvarma.org/downloads/XC/XMLRepository.html}} and their text equivalents, by courtesy of \citet{Liu_et_al_2017}.
We compare the models in terms of precision@$\{1,3,5\}$, model size, training and test time. The competitors for our~\Algo{XT} are original~\Algo{fastText}, its variant~\Algo{Learned Tree}, a \Algo{PLT}-like batch learning algorithm \Algo{Parabel} (we use the variant that uses a single tree instead of an ensemble), a XMLC-designed convolutional deep network \Algo{XML-CNN}, a decision tree ensemble \Algo{FastXML}, and two 1-vs-All approaches tailored to XMLC problems, \Algo{PPD-Sparse} and \Algo{DiSMEC}. The hyperparameters of the models have been tuned using grid search. The range of the hyperparameters is reported in~\ref{sec:hyper}.  




The results presented in Table \ref{tab:datasets-differences-table} demonstrate that \Algo{XT} outperforms the \Algo{HSM} approaches with the pick-one-label heuristic, namely \Algo{fastText} and \Algo{Learned Tree}, with a large margin. This proves the superiority of \Algo{PLT}s as the proper generalization of \Algo{HSM} to multi-label setting. In all the above methods we use vectors of length 500 and we tune the other hyperparameters appropriately for a fair comparison.

Moreover, \Algo{XT} scales well to extreme datasets achieving performance close to the state-of-the-art, being at the same time 10000x and 100x faster compared to \Algo{DiSMEC} and \Algo{PPDSparse} during prediction. \Algo{XT} always responds below 2ms, what makes it a competitive alternative for an online setting. \Algo{XT} is also close to \Algo{Parabel} in terms of performance. However, the reported times and model sizes of \Algo{Parabel} are given for the batch prediction. The prediction times seem to be faster, but \Algo{Parabel} needs to decompress the model during prediction, what makes it less suitable for online prediction. It is only efficient when the batches are sufficiently large.  
Finally, we would like to underline that \Algo{XT} outperforms \Algo{XML-CNN}, the more complex neural network, in terms of predictive performance with computational costs that are an order of magnitude smaller. Moreover, \Algo{XML-CNN} requires pretrained embedding vectors, whereas \Algo{XT} can be used with random initialization.

\begin{table*}[t]
        \caption{\small Precision@$k$ scores with $k=\{ 1,3,5\}$ and statistics of \Algo{FastXML}, \Algo{PPDSparse}, \Algo{DiSMEC}, \Algo{Parabel} (with 1 tree), \Algo{fastText} (\Algo{FT}), \Algo{Learned Tree} (\Algo{LT}), \Algo{extremeText} (\Algo{XT}) and \Algo{XML-CNN} methods. Notation:
        $N$ -- number of samples, $T$ -- CPU time, $m$ -- number of labels, $d$ -- number of features, $\ast$ -- result of offline prediction, $\star$ -- calculated on GPU, $\dagger$ -- not reported by authors, $\ddagger$  -- cannot be calculated due to lack of a text version of a dataset. }
        \label{tab:datasets-differences-table}
        \begin{center}
                \begin{normalsize}
                        \resizebox{\textwidth}{!}{
                        \begin{tabular}{@{}l | l@{\hskip 0pt} | l@{\hskip 0pt} | l@{\hskip 0pt} | l@{\hskip 0pt} || l@{\hskip 0pt} | l@{\hskip 0pt} | l@{\hskip 0pt} || l@{\hskip 0pt} || l@{}}
                                \toprule
\multicolumn{1}{c|}{Dataset} 
& \multicolumn{1}{|c|}{Metrics} 
& \multicolumn{1}{|c|}{\Algo{FastXML}} 
& \multicolumn{1}{|c|}{\Algo{PPDSparse}}
& \multicolumn{1}{|c||}{\Algo{DiSMEC}}
& \multicolumn{1}{|c|}{\Algo{FT}} 
& \multicolumn{1}{|c|}{\Algo{LT}}
& \multicolumn{1}{|c||}{\textbf{\Algo{XT}}} 
& \multicolumn{1}{|c||}{\Algo{Parabel}}
& \multicolumn{1}{|c}{\Algo{XML-CNN}} \\
                                \midrule
\datasix{\textbf{Wiki-30K}}{$N_{train} = 14146$}{$N_{test} = 6616$}{$d = 101938$}{$m = 30938$}{}
& \datasix{P@1}{P@3}{P@5}{$T_{train}$}{$T_{test}/N_{test}$}{model size} 
& \datasix{82.03}{67.47}{57.76}{16m}{3.00ms}{354M}
& \datasix{73.80}{60.90}{50.40}{$\dagger$}{$\dagger$}{$\dagger$}
& \datasix{85.20}{\textbf{74.60}}{\textbf{65.90}}{$\dagger$}{$\dagger$}{$\dagger$}
& \datasix{80.78}{50.46}{36.79}{10m}{1.88ms}{513M}
& \datasix{80.85}{50.59}{37.68}{12m}{10.09ms}{513M}
& \datasix{\textbf{85.23}}{73.18}{63.39}{18m}{\textbf{0.83ms}}{\textbf{259M}}
& \datasix{83.77}{71.96}{62.44}{\textbf{5m}}{1.63ms$\ast$}{\textbf{109M$\ast$}}
& \datasix{82.78}{66.34}{56.23}{88m$\star$}{1.39ms$\star$}{$\star$} \\
\midrule

\datasix{\textbf{Delicious-200K}}{$N_{train} = 196606$}{$N_{test} = 100095$}{$d = 782585$} {$m = 205443$}{}
& \datasix{P@1}{P@3}{P@5}{$T_{train}$}{$T_{test}/N_{test}$}{model size} 
& \datasix{42.81}{38.76}{36.34}{458m}{4.86ms}{15.4G} 
& \datasix{45.05}{38.34}{34.90}{4781m}{275ms}{9.4G} 
& \datasix{44.71}{38.08}{34.7}{1080h}{5m}{18.0G} 
& \datasix{42.22}{37.90}{35.05}{271m}{1.97ms}{9.0G}
& \datasix{42.71}{36.27}{33.43}{563m}{1.98ms}{9.0G}
& \datasix{\textbf{47.85}}{\textbf{42.08}}{\textbf{39.13}}{502m}{\textbf{1.41ms}} {\textbf{1.9G}}
& \datasix{43.32}{38.49}{35.83}{105m}{\textbf{1.31ms$\ast$}}{\textbf{1.8G$\ast$}} 
& \datasix{$\ddagger$}{$\ddagger$}{$\ddagger$}{$\ddagger$}{$\ddagger$}{$\ddagger$} \\
\midrule

\datasix{\textbf{WikiLSHTC}}{$N_{train} = 1778351$}{$N_{test} = 587084$}{$d = 617899$} {$m = 325056$}{} 
& \datasix{P@1}{P@3}{P@5}{$T_{train}$}{$T_{test}/N_{test}$}{model size} 
& \datasix{49.35}{32.69}{24.03}{724m}{2.17ms}{9.3G}
& \datasix{64.08}{41.26}{30.12}{236m}{37.76ms}{5.2G}
& \datasix{\textbf{64.94}}{\textbf{42.71}}{\textbf{31.5}}{750h}{43m}{3.8G}
& \datasix{41.13}{24.09}{17.44}{207}{1.25ms}{6.5G}
& \datasix{50.15}{31.95}{23.59}{212m}{4.76ms}{6.5G} 
& \datasix{58.73}{39.24}{29.26}{550m}{\textbf{0.81ms}}{\textbf{3.3G}} 
& \datasix{61.53}{40.07}{29.25}{\textbf{34m}}{0.92ms$\ast$}{\textbf{1.1G$\ast$}}
& \datasix{$\ddagger$}{$\ddagger$}{$\ddagger$}{$\ddagger$}{$\ddagger$}{$\ddagger$} \\
\midrule

\datasix{\textbf{Wiki-500K}}{$N_{train} = 1813391$}{$N_{test} = 783743$}{$d = 2381304$} {$m = 501070$}{} 
& \datasix{P@1}{P@3}{P@5}{$T_{train}$}{$T_{test}/N_{test}$}{model size} 
& \datasix{54.10}{29.45}{21.21}{3214m}{8.03ms}{63G}
& \datasix{70.16}{50.57}{39.66}{1771m}{113.70ms}{3.4G}
& \datasix{\textbf{70.20}}{\textbf{50.60}}{\textbf{39.70}}{7495h}{155m}{14.7G}
& \datasix{32.73}{19.02}{14.46}{496m}{2.05ms}{11G}
& \datasix{37.18}{21.62}{16.01}{531m}{6.43ms}{11G}
& \datasix{64.48}{45.84}{35.46}{1253m}{\textbf{1.07ms}}{\textbf{5.5G}} 
& \datasix{66.12}{47.02}{36.45}{\textbf{168m}}{4.68ms$\ast$}{\textbf{2.0G$\ast$}}
& \datasix{59.85}{39.28}{29.81}{7032m$\star$}{21.06ms$\star$}{\textbf{3.7G$\star$}} \\
\midrule

\datasix{\textbf{Amazon-670K}}{$N_{train} = 490449$}{$N_{test} = 153025$}{$d = 135909$} {$m = 670091$}{} 
& \datasix{P@1}{P@3}{P@5}{$T_{train}$}{$T_{test}/N_{test}$}{model size} 
& \datasix{34.24}{29.30}{26.12}{422m}{3.39ms}{10G}
& \datasix{45.32}{40.37}{36.92}{102m}{66.09ms}{6.0G} 
& \datasix{\textbf{45.37}}{\textbf{40.40}}{\textbf{36.96}}{373h}{23m}{3.8G}
& \datasix{25.47}{21.47}{18.61}{162m}{7.84ms}{3.2G}
& \datasix{27.67}{20.96}{17.72}{182m}{5.13ms}{3.2G}
& \datasix{39.90}{35.36}{32.04}{241m}{\textbf{1.72ms}}{\textbf{1.5G}}
& \datasix{41.59}{37.18}{33.85}{\textbf{8m}}{\textbf{0.68ms$*$}}{\textbf{0.7G$*$}}
& \datasix{35.39}{33.74}{32.64}{3134m$\star$}{16.18ms$\star$}{\textbf{1.5G$\star$}} \\
                                \bottomrule
                        \end{tabular}}
                \end{normalsize}
        \end{center}
\end{table*}

In the third experiment we perform an ablation analysis in which we compare different components of the \Algo{XT} algorithm. We analyze the influence of the Huffman tree vs. top-down clustering, the simple averaging of features vectors vs. the TF-IDF-based weighting, and no regularization vs. L2 regularization. Figure~\ref{fig:ab} clearly shows that the components need to be combined together to obtain the best results. The best combination uses top-down clustering, TF-IDF-based weighting, and L2 regularization, while top-down clustering alone gets worse results than Huffman trees with TF-IDF-based weighting and L2 regularization. In Appendix~\ref{app:ablation-analysis} we give more detailed results of the ablation analysis performed on a larger spectrum of benchmark datasets. %

\definecolor{yahooviolet}{RGB}{66, 2, 176}
\definecolor{putblue}{RGB}{0, 103, 144}

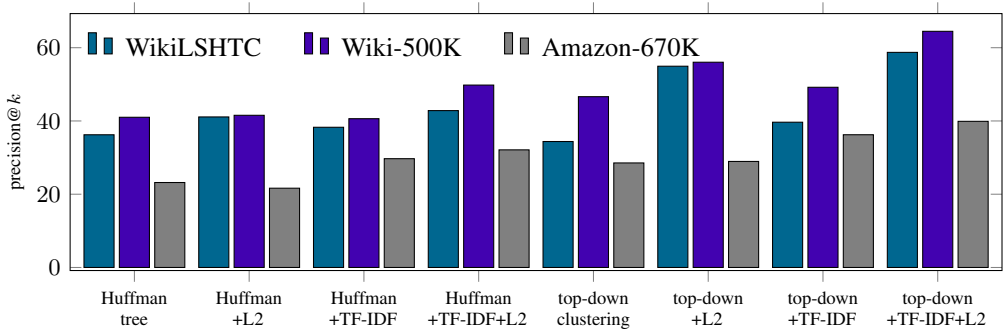
\begin{figure}[h]
\begin{center}
\scalebox{1.0}{
\begin{tikzpicture}
\begin{axis}[
    ybar,
    enlargelimits=0.08,
    bar width=4mm,
    ylabel={precision@$k$},
    ymin=4,
    symbolic x coords={Huffman\\tree, Huffman\\+L2, Huffman\\+TF-IDF, Huffman\\+TF-IDF+L2, 
    top-down\\clustering, top-down\\+L2, top-down\\+TF-IDF, top-down\\+TF-IDF+L2},
    xtick=data,
    width=\textwidth,
    height=5cm,
    x tick label style={font=\scriptsize, text width=4cm, align=center, rotate=0},
    y tick label style={font=\footnotesize},
    y label style={font=\scriptsize, at={(axis description cs:0.06,.5)}, anchor=south},
    legend style={
        align=left,
        at={(0.01,0.95)},
        anchor=north west,
        draw=none, 
        text depth=0pt,
        fill=none,
        legend columns=-1,
        column sep=1cm,
        /tikz/column 2/.style={column sep=0pt},
        /tikz/every odd column/.append style={column sep=0.1cm},
        /tikz/every even column/.append style={column sep=0.5cm}
    },
    ]
    
\addplot[fill=putblue] coordinates {(Huffman\\tree, 36.23)(Huffman\\+L2,41.10)(Huffman\\+TF-IDF, 38.28)(Huffman\\+TF-IDF+L2, 42.83)(top-down\\clustering, 34.39)(top-down\\+L2, 54.95)(top-down\\+TF-IDF, 39.66)(top-down\\+TF-IDF+L2, 58.73)}; 

\addplot[fill=yahooviolet] coordinates {(Huffman\\tree, 41.01)(Huffman\\+L2, 41.55)(Huffman\\+TF-IDF, 40.62)(Huffman\\+TF-IDF+L2, 49.80)(top-down\\clustering, 46.63)(top-down\\+L2, 56.04)(top-down\\+TF-IDF, 49.20)(top-down\\+TF-IDF+L2, 64.48)}; 

\addplot[fill=gray] coordinates {(Huffman\\tree, 23.194)(Huffman\\+L2, 21.642)(Huffman\\+TF-IDF, 29.70)(Huffman\\+TF-IDF+L2, 32.111)(top-down\\clustering, 28.538)(top-down\\+L2, 28.95)(top-down\\+TF-IDF, 36.239)(top-down\\+TF-IDF+L2, 39.90)}; 

\legend{WikiLSHTC, Wiki-500K, Amazon-670K}
\end{axis}
\end{tikzpicture}
}
\end{center}
\caption{The ablation analysis of different variants of XT on \textsc{WikiLSHTC}, \textsc{Wiki-500k}, and \textsc{Amazon-670K}.}
\label{fig:ab}
\end{figure}

\section{Conclusions}

In this paper we have proven that probabilistic label trees (\Algo{PLT}s) are no-regret generalization of \Algo{HSM} to the multi-label setting. Our main theoretical contribution is the precision@$k$ regret bound for \Algo{PLT}s. Moreover, we have shown that the pick-one-label heuristic commonly-used with \Algo{HSM} in multi-label problems leads to inconsistent results in terms of marginal probability estimation and precision@$k$. Our implementation of \Algo{PLT}s referred to as \Algo{XT}, built upon \Algo{fastText}, gets state-of-the-art results, being significantly better than the original \Algo{fastText}, \Algo{Learned Tree}, and \Algo{XML-CNN}. The \Algo{XT} results are also close to the best known ones that are obtained by expensive 1-vs-All approaches, such as \Algo{PPDSparse} and \Algo{DiSMEC}, and outperforms the other tree-based methods on many benchmarks.
Our online variant has the advantage of producing very often much smaller models that can be efficiently used in fast online prediction. 


\section*{Acknowledgements}

The work of Kalina Jasinska was supported by the Polish National Science Center under grant no. 2017/25/N/ST6/00747.
The work of Krzysztof Dembczyński was supported by the Polish Ministry of Science and Higher Education under grant no. 09/91/DSPB/0651.
\newpage

\bibliography{xmlc_references}
\bibliographystyle{icml2018}

\appendix

\onecolumn

\setcounter{theorem}{0}

\section{Regret for precision@$k$}
\label{app:prec@k}

Precision@$k$ is formally defined as:
$$
\mathrm{precision}@k(\by, \bx, \bh) = \frac{1}{k} \sum_{j \in \hat \calY_k} \assert{y_j = 1},
$$
where $\hat \calY_k$ is a set of $k$ labels predicted by $\bh$ for $\bx$.
The loss function for precision@$k$ can be defined as $\ell_{p@k} = 1 - \mathrm{precision}@k$. The conditional risk is then:
\begin{eqnarray*}
\loss_{p@k}(\bh \given \bx) & = & \mathbb{E}_{\by} \ell_{p@k}(\by,\bx, \bh) \\
& = & 1 - \sum_{\by \in \calY} \prob(\by \given \bx) \frac{1}{k} \sum_{j \in \hat \calY_k} \assert{y_j = 1} \\
& = & 1 - \frac{1}{k} \sum_{j \in \hat \calY_k} \sum_{\by \in \calY} \prob(\by \given \bx) \assert{y_j = 1} \\
& = & 1 - \frac{1}{k} \sum_{j \in \hat \calY_k} \eta_j(\bx) \,.
\end{eqnarray*}
The above result shows that the optimal strategy for precision@$k$ is to predict $k$ labels
with the highest marginal probabilities $\eta_j(\bx)$.

We show now that \Algo{PLT}s obey strong theoretical guarantees. We first recall the result from~\citep{Jasinska_et_al_2016} that relates the absolute difference between the true and the estimated marginal probability of label $j$, $|\eta_j(\bx) - \heta_j(\bx)|$, to the surrogate loss $\ell$ used to train node classifiers $f_{\bz^i}$. It is assumed here that $\ell$ is a strongly proper composite loss (e.g, logistic, exponential, or squared-error loss) characterized by a constant $\lambda$ (e.g. $\lambda = 4$ for logistic loss).\footnote{For more detailed introduction to strongly proper composite losses, we refer the reader to~\citep{Agarwal_2014}}

\begin{theorem}
For any distribution $\prob$ and internal node classifiers $f_{\bz^i}$, the following holds:
$$
\left | \eta_j(\bx) - \heta_j(\bx) \right |  \le  \sum_{i=0}^l \prob(\bz^{i-1} \given \bx) \sqrt{ \frac{2}{\lambda}} \sqrt{\reg_\ell(f_{\bz^i} \given \bz^{i-1}, \bx)} \,,
$$
where $\reg_\ell(f_{\bz^i} \given \bz^{i-1}, \bx)$ is a binary classification regret for a strongly proper composite loss $\ell$ and $\lambda$ is a constant specific for loss $\ell$.
\end{theorem}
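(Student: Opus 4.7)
The plan is to exploit the product factorization $\eta_j(\bx) = \prod_{i=0}^l \prob(z_i \given \bz^{i-1}, \bx)$ from (\ref{eqn:plt}), together with the analogous factorization $\heta_j(\bx) = \prod_{i=0}^l \hprob(z_i \given \bz^{i-1}, \bx)$ of the estimated marginal obtained by running the path classifiers. The standard telescoping identity
\begin{equation*}
\prod_{i=0}^l a_i - \prod_{i=0}^l b_i \;=\; \sum_{i=0}^l \Bigl(\prod_{k<i} a_k\Bigr)(a_i - b_i)\Bigl(\prod_{k>i} b_k\Bigr)
\end{equation*}
applied to $a_i = \prob(z_i \given \bz^{i-1}, \bx)$ and $b_i = \hprob(z_i \given \bz^{i-1}, \bx)$, combined with the fact that all factors lie in $[0,1]$, yields
\begin{equation*}
\bigl| \eta_j(\bx) - \heta_j(\bx) \bigr| \;\le\; \sum_{i=0}^l \prob(\bz^{i-1} \given \bx)\, \bigl|\prob(z_i \given \bz^{i-1}, \bx) - \hprob(z_i \given \bz^{i-1}, \bx)\bigr|,
\end{equation*}
where I have collapsed $\prod_{k<i}\prob(z_k \given \bz^{k-1}, \bx) = \prob(\bz^{i-1} \given \bx)$ by the chain rule and adopted the convention $\prob(\bz^{-1} \given \bx) = 1$ so that the root term ($i=0$) fits the same template.

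Next I would invoke the defining property of strongly proper composite losses with constant $\lambda$, which gives $(\eta - \hat\eta)^2 \le \tfrac{2}{\lambda}\,\reg_\ell(\hat\eta)$ pointwise for any estimator $\hat\eta$. Applied node-by-node to the binary estimation problem that $f_{\bz^i}$ is trained on (conditioned on reaching node $\bz^{i-1}$), and taking square roots, this gives
\begin{equation*}
\bigl|\prob(z_i \given \bz^{i-1}, \bx) - \hprob(z_i \given \bz^{i-1}, \bx)\bigr| \;\le\; \sqrt{\tfrac{2}{\lambda}}\,\sqrt{\reg_\ell(f_{\bz^i} \given \bz^{i-1}, \bx)}.
\end{equation*}
Substituting this into the telescoped inequality yields the bound in the theorem statement.

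The main subtlety I expect to work through is the conditioning. The classifier regret $\reg_\ell(f_{\bz^i} \given \bz^{i-1}, \bx)$ is only defined on the event $\{\prob(\bz^{i-1} \given \bx) > 0\}$, so I need to verify that whenever $\prob(\bz^{i-1} \given \bx) = 0$ the corresponding summand is annihilated by the leading weight $\prob(\bz^{i-1} \given \bx)$ and can be ignored, while on the complementary event the strongly proper composite inequality applies to the conditional problem, on whose distribution the node classifier is in fact trained. Once this bookkeeping and the trivial $i=0$ boundary case are in place, the proof is a direct concatenation of the telescoping identity with the strongly proper composite loss bound.
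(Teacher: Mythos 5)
Your proposal is correct and follows essentially the same route as the paper: the telescoping identity you use is just the closed-form version of the paper's add-and-subtract recursion (which peels off one factor at a time, bounds the estimated factor by $1$, and recurses), yielding the identical intermediate bound $|\eta_j(\bx)-\heta_j(\bx)| \le \sum_{i=0}^{l}\prob(\bz^{i-1}\given\bx)\,|\prob(z_i\given\bz^{i-1},\bx)-\hprob(z_i\given\bz^{i-1},\bx)|$, after which both arguments apply the same strongly proper composite loss regret bound node by node. The conditioning caveat you flag (terms with $\prob(\bz^{i-1}\given\bx)=0$ being annihilated by their weight) matches the paper's remark following the theorem statement.
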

\begin{proof}
From Equation~(\ref{eqn:plt}) we have:
$$
\eta_j(\bx) = \prob(\bz \given \bx) = \prod_{i = 0}^l \prob(z_i \given \bz^{i-1}, \bx) = \prob(\bz^{n-1} \given \bx)\prod_{i = n}^l \prob(z_i \given \bz^{i-1}, \bx)\,,
$$ 
for any $1 \le n \le l$. A similar equation holds for the estimates $\heta_j(\bx)$, $\hprob(z_i \given \bz^{i-1}, \bx)$, and $\hprob(\bz^{n-1} \given \bx)$. 
 
By expressing $\eta_j(\bx)$  and $\heta_j(\bx)$ in the aforementioned way we get:
\begin{eqnarray*}
\left | \eta_j(\bx) - \heta_j(\bx) \right |  & = & \Big| \prod_{i = 0}^l \prob(z_i \given \bz^{i-1}, \bx) - \prod_{i = 0}^l \hprob(z_i \given \bz^{i-1}, \bx) \Big| \\
& = & \Big| \prob(\bz^{l-1} \given \bx) \prob(z_l \given \bz^{l-1}, \bx) - \hprob(\bz^{l-1} \given \bx) \hprob(z_l \given \bz^{l-1}, \bx)\Big|
\end{eqnarray*}

By adding and subtracting $\hprob(z_l \given \bz^{l-1}, \bx)\prob(\bz^{l-1} \given \bx)$ and using the fact that $|a - b| \leq |a - c| + |b - c|$, and that probability values are in $[0 ,1]$, we can write:
\begin{eqnarray*}
\left | \eta_j(\bx) - \heta_j(\bx) \right | & = & \Big| \prob(\bz^{l} \given \bx) - \hprob(\bz^{l} \given \bx) \Big|\\
& = & \Big| \prob(\bz^{l-1} \given \bx) \prob(z_l \given \bz^{l-1}, \bx) - \hprob(\bz^{l-1} \given \bx) \hprob(z_l \given \bz^{l-1}, \bx)\Big| \\
& = & \Big| \prob(\bz^{l-1} \given \bx) \prob(z_l \given \bz^{l-1}, \bx) - \hprob(\bz^{l-1} \given \bx) \hprob(z_l \given \bz^{l-1}, \bx) \\
&  & \quad+ \quad \hprob(z_l \given \bz^{l-1}, \bx)\prob(\bz^{l-1} \given \bx) - \hprob(z_l \given \bz^{l-1}, \bx)\prob(\bz^{l-1} \given \bx)\Big| \\
& \leq & 
\Big|  \prob(\bz^{l-1} \given \bx) \prob(z_l \given \bz^{l-1}, \bx) - \hprob(z_l \given \bz^{l-1}, \bx)\prob(\bz^{l-1} \given \bx) \Big | \\
&  & \quad+ \quad \Big| \hprob(\bz^{l-1} \given \bx) \hprob(z_l \given \bz^{l-1}, \bx) - \hprob(z_l \given \bz^{l-1}, \bx)\prob(\bz^{l-1} \given \bx) \Big| \\ 
& = &  \prob(\bz^{l-1} \given \bx) \Big|  \prob(z_l \given \bz^{l-1}, \bx) - \hprob(z_l \given \bz^{l-1}, \bx) \Big | \\
&  &  \quad+ \quad \hprob(z_l \given \bz^{l-1}, \bx) \Big| \hprob(\bz^{l-1} \given \bx) - \prob(\bz^{l-1} \given \bx) \Big| \\
& \leq & 
 \prob(\bz^{l-1} \given \bx) \Big|  \prob(z_l \given \bz^{l-1}, \bx) - \hprob(z_l \given \bz^{l-1}, \bx) \Big | \\
 &  & \quad+ \quad \Big| \prob(\bz^{l-1}\given \bx) - \hprob(\bz^{l-1}\given \bx) \Big|
\end{eqnarray*}

\noindent
We notice that rightmost term corresponds to the absolute value of the difference of probabilities corresponding to one-element shorter code $\bz^{l-1}$. Therefore we can use recursion and write:
\begin{equation}
\left | \eta_j(\bx) - \heta_j(\bx) \right |  \leq  \sum_{i=0}^{l}\prob(\bz^{i-1}\given \bx) \Big| \prob(z_i | \bz^{i-1}, \bx) - \hprob(z_i | \bz^{i-1}, \bx) \Big|.
\label{eqn:estimation_bound}
\end{equation}

Next, we express the above bound in terms of the regret of the strongly proper composite losses. 
The $(\bx, z_{i})$ pairs are generated i.i.d. according to $\prob(\bx, z_i \given \bz^{i-1})$.
Assume that a node classifier has a form of a real-valued function $f_{\bz^i}$.
Moreover, there exists a strictly increasing (and therefore invertible) link function
$\psi: [0,1] \rightarrow \mathbb{R}$ such that $f_{\bz^i}(\bx) = \psi(\prob(z_i \given \bz^{i-1}, \bx))$. 
Recall that the regret of $f_{\bz^i}$ in terms of a loss function $\ell$ at point $\bx$ is defined as:
$$
\reg_{\ell}(f_{\bz^i} \given \bz^{i-1}, \bx) = L_{\ell}(f_{\bz^i} \given \bz^{i-1}, \bx) - L_{\ell}^*( \bz^{i-1},\bx) \,,
$$
where $L_{\ell}(f_{\bz^i} \given \bz^{i-1}, \bx)$ is the expected loss at point $\bx$:
$$
L_{\ell}(f_{\bz^i} \given \bz^{i-1}, \bx) =  \prob(z_i \given \bz^{i-1}, \bx) \ell  (1, f_{\bz^i}(\bx))   + (1 - \prob(z_i \given \bz^{i-1}, \bx)) \ell  (-1, f_{\bz^i}(\bx))  \,,
$$
and  $L_{\ell}^*(\bx)$ is the minimum expected loss at point $\bx$.

If a node classifier is trained by a learning algorithm that minimizes a strongly proper composite loss, then the bound (\ref{eqn:estimation_bound}) can be expressed in terms of the regret of this loss function~\citep{Agarwal_2014}: 
$$
\left | \prob(z_i \given \bz^{i-1}, \bx)  - \psi^{-1}(f_{\bz^i})  \right | \le \sqrt{ \frac{2}{\lambda}} \sqrt{\reg_\ell(f_{\bz^i} \given \bz^{i-1}, \bx)} \,.
$$
By putting the above inequality into~(\ref{eqn:estimation_bound}), we get
\begin{align*}
\left | \eta(\bx, j) - \heta(\bx, j) \right | 
& 
\le \! \sum_{i=0}^l  \prob(\bz^{i-1}\given \bx)\left | \prob(z_i \given \bz^{i-1}, \bx)  - \hprob(z_i \given \bz^{i-1}, \bx)  \right | \\ 
& 
= \!  \sum_{i=0}^l  \prob(\bz^{i-1} \given \bx) \left | \prob(z_i \given \bz^{i-1}, \bx) - \psi^{-1}(f_{\bz^{i}})  \right | \\ 
&
\le  \! \sum_{i=0}^l \prob(\bz^{i-1} \given \bx) \sqrt{ \frac{2}{\lambda}} \sqrt{\reg_\ell(f_{\bz^{i}} \given \bz^{i-1}, \bx)}
\end{align*} 
\end{proof}


The above result shows that the absolute error of estimating the marginal probability of label $j$ can be upperbounded by the regret of the node classifiers on the corresponding path from the root to a leaf. Moreover, for zero-regret (i.e., optimal) node classifiers we obtain an optimal multi-label classifier in terms of estimation of marginal probabilities $\eta_j(\bx)$. This result can be further extended for precision@$k$. 

Let us denote a set of the top $k$ labels with respect to the true marginals by $\calY_k$ and a set of the top k labels with respect to predicted marginals by $\hat \calY_k$.
The conditional regret for precision@$k$ is given then by:
$$
\reg_{p@k} (\bh \given \bx) = \frac{1}{k}\sum_{i \in \calY_k} \eta_i(\bx) - \frac{1}{k}\sum_{j \in \hat \calY_k} \eta_j(\bx)
$$

\begin{theorem}
For any distribution $\prob$ and classifier $\bh$ delivering estimates $\heta_j(\bx)$ of the marginal probabilities of labels, the following holds:
\begin{align*}
\reg_{p@k} (\bh \given \bx) = \frac{1}{k}\sum_{i \in \calY_k} \eta_i(\bx) - \frac{1}{k}\sum_{j \in \hat \calY_k} \eta_j(\bx) \le 2 \max_{l} \left | \eta_l(\bx) - \heta_l(\bx) \right |
\end{align*}
\end{theorem}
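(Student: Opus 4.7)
The plan is to follow the hint directly: introduce the quantities $\frac{1}{k}\sum_{i \in \calY_k} \heta_i(\bx)$ and $\frac{1}{k}\sum_{j \in \hat\calY_k} \heta_j(\bx)$ into the expression for $\reg_{p@k}(\bh \given \bx)$ so that the regret splits into three parts: a sum over $\calY_k$ of $\eta - \heta$ differences, a sum comparing the estimated top-$k$ mass on $\calY_k$ versus $\hat\calY_k$, and a sum over $\hat\calY_k$ of $\heta - \eta$ differences. The key structural fact to exploit is that $\hat\calY_k$ is, by definition, the set of $k$ labels maximizing $\heta_j(\bx)$, so the middle term is non-positive.

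More concretely, first I would rewrite
\begin{align*}
\reg_{p@k}(\bh \given \bx) &= \frac{1}{k}\Big(\sum_{i \in \calY_k} \eta_i(\bx) - \sum_{i \in \calY_k} \heta_i(\bx)\Big) \\
&\quad + \frac{1}{k}\Big(\sum_{i \in \calY_k} \heta_i(\bx) - \sum_{j \in \hat\calY_k} \heta_j(\bx)\Big) \\
&\quad + \frac{1}{k}\Big(\sum_{j \in \hat\calY_k} \heta_j(\bx) - \sum_{j \in \hat\calY_k} \eta_j(\bx)\Big).
\end{align*}
Since $|\hat\calY_k| = |\calY_k| = k$ and $\hat\calY_k$ contains the $k$ labels with largest $\heta$, the middle bracket is $\le 0$ and can be dropped.

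For the two remaining brackets, I would bound each term of the inner sums by $\max_l |\eta_l(\bx) - \heta_l(\bx)|$. Each bracket contains exactly $k$ signed differences of the form $\eta_l(\bx) - \heta_l(\bx)$ (or its negation), so each bracket is at most $\frac{1}{k}\cdot k \cdot \max_l |\eta_l(\bx) - \heta_l(\bx)| = \max_l |\eta_l(\bx) - \heta_l(\bx)|$. Adding the two contributions gives the claimed factor of $2$.

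There is no real obstacle here; the only subtlety is justifying that the middle term is non-positive, which follows immediately from the definition of $\hat\calY_k$ as the top-$k$ set under $\heta$. Combined with Theorem~\ref{thm:log}, this yields a bound on $\reg_{p@k}$ in terms of the node-classifier surrogate regrets along the relevant root-to-leaf paths, which is the payoff the paper is after.
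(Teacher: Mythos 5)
Your proposal is correct and follows essentially the same argument as the paper: add and subtract $\frac{1}{k}\sum_{i \in \calY_k} \heta_i(\bx)$ and $\frac{1}{k}\sum_{j \in \hat \calY_k} \heta_j(\bx)$, drop the non-positive middle term using the fact that $\hat\calY_k$ is the top-$k$ set under $\heta$, and bound each remaining bracket by $\max_{l} \left| \eta_l(\bx) - \heta_l(\bx) \right|$. No gaps; nothing further is needed.
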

\begin{proof}
Let us add and subtract the following two terms, $\frac{1}{k}\sum_{i \in \calY_k} \heta_i(\bx)$ and $\frac{1}{k}\sum_{j \in \hat \calY_k} \heta_j(\bx)$, to the regret and reorganize the expression in the following way:
\begin{align*}
\reg_{p@k} (\bh \given \bx) & = \underbrace{\frac{1}{k}\sum_{i \in \calY_k} \eta_i(\bx) - \frac{1}{k}\sum_{i \in \calY_k} \heta_i(\bx )}_{\le \frac{1}{k}\sum_{i \in \calY_k} \left |\eta_i(\bx) - \heta_i(\bx) \right | } \\
&  + \underbrace{\frac{1}{k}\sum_{j \in \hat \calY_k} \heta_j(\bx) -  \frac{1}{k}\sum_{j \in \hat \calY_k} \eta_j(\bx)}_{\le \frac{1}{k}\sum_{j \in \hat \calY_k} \left |\heta_j(\bx) - \eta_j(\bx) \right |  }\\
 &  + \underbrace{\frac{1}{k}\sum_{i \in \calY_k} \heta_i(\bx)  - \frac{1}{k}\sum_{j \in \hat \calY_k} \heta_j(\bx)}_{\le 0} 
\end{align*}
Because of the relations given under the braces, we finally get:
$$
\reg_{p@k} (\bh \given \bx) \le 2 \max_{l} \left | \eta_l(\bx) - \heta_l(\bx) \right | \,.
$$
\end{proof}
By getting together both theorems we get an upper bound of the precision@$k$ regret expressed in terms of the regret of the node classifiers. Again, for the zero-regret node classifiers, we get optimal solution in terms of precision@$k$.

\section{Hierarchical softmax with the pick-one-label heuristic}
\label{app:hsm-independent}


\hsmindependent*
\begin{proof}
To proof the proposition it suffices to show that for conditionally independent labels the order of 
labels induced by the marginal probabilities $\eta_j(\bx)$ is the same as the order induced by 
the values of $\eta_j'(\bx)$ obtained by the pick-one-label heuristic (\ref{eq:heuristic}):
\begin{equation*}
\eta_j'(\bx) = \prob'(y_j = 1 \given \bx) = \sum_{\by \in \calY}  \frac{y_j}{\sum_{j'=1}^m y_{j'}}\prob(\by \given \bx).
\end{equation*}
In other words, for any two labels $i, j \in \{1, \dots ,m\}$, $i \neq j$, $\eta_i(\bx) \ge \eta_j(\bx) \Leftrightarrow \eta_i'(\bx) \ge \eta_j'(\bx)$.

Let $\eta_i(\bx) \ge \eta_j(\bx)$. The summation over all $\by$ in (\ref{eq:heuristic}) can be written in the following way:
$$
\eta_j'(\bx) = \sum_{\by \in \calY}  y_j N(\by) \prob(\by | \bx)\,,
$$
where $N(\by) = (\sum_{i=1}^m y_{i})^{-1}$ is a value that depends only on the number of positive labels in $\by$. In this summation we consider four subsets of $\mathcal{Y}$, creating a partition of this set: 
$$
    \mathcal{S}^{u,w}_{i,j} = \{ \by \in \mathcal{Y}: y_i = u \land y_j = w \}, \quad u,w \in \{0, 1\}.
$$
The subset $\mathcal{S}^{0,0}_{i,j}$ does not play any role because $y_i = y_j = 0$ and therefore do not contribute to the final sum.
Then (\ref{eq:heuristic}) can be written in the following way for the $i$-th and $j$-th label:
\begin{eqnarray}
\eta_i'(\bx)  & = & \sum_{\by : \mathcal{S}^{1,0}_{i,j}}{ N(\by) \prob(\by | \bx) } +  \sum_{\by \in \mathcal{S}^{1,1}_{i,j}}{ N(\by) \prob(\by | \bx) } \label{eqn:eta_i}\\
\eta_j'(\bx) & = & \sum_{\by : \mathcal{S}^{0,1}_{i,j}}{ N(\by) \prob(\by | \bx) } +  \sum_{\by \in \mathcal{S}^{1,1}_{i,j}}{ N(\by) \prob(\by | \bx) }
\label{eqn:eta_j}
\end{eqnarray}
The contribution of elements from $\mathcal{S}^{1,1}_{i,j}$ is equal for both $\eta_i'(\bx)$ and $\eta_j'(\bx)$.
It is so because the value of $N(\by) \prob(\by | \bx)$ is the same for all $\by \in \mathcal{S}^{1,1}_{i,j}$: the conditional joint probabilities $\prob(\by | \bx)$ are fixed and they are multiplied by the same factors $N(\by)$.

Consider now the contributions of  $\mathcal{S}^{1,0}_{i,j}$ and  $\mathcal{S}^{0,1}_{i,j}$
to the relevant sums. 
By the definition of $\mathcal{Y}$, $\mathcal{S}^{1,0}_{i,j}$, and $\mathcal{S}^{0,1}_{i,j}$, there exists bijection $b_{i,j}: \mathcal{S}^{1,0}_{i,j} \rightarrow \mathcal{S}^{0,1}_{i,j}$, such that for each $\by' \in \mathcal{S}^{1,0}_{i,j}$ there exists $\by'' \in \mathcal{S}^{0,1}_{i,j}$ equal to $\by'$ except on the $i$-th and the $j$-th position.



Notice that because of the conditional independence assumption the joint probabilities of elements in $\mathcal{S}^{1,0}_{i,j}$ and $\mathcal{S}^{0,1}_{i,j}$ are related to each other. Let $\by'' = b_{i,j}(\by')$, where $\by' \in \mathcal{S}^{1,0}_{i,j}$ and $\by'' \in \mathcal{S}^{0,1}_{i,j}$. The joint probabilities are:
$$
\prob(\by'| \bx) = \eta_i(\bx)(1 - \eta_j(\bx)) \prod_{l \in \mathcal{L} \setminus \{i,j\}} \eta_l(\bx)^{y_l} (1 - \eta_l(\bx))^{1 - y_l}
$$
and
$$
\prob(\by''| \bx) = (1 - \eta_i(\bx)) \eta_j(\bx) \prod_{l \in \mathcal{L} \setminus \{i,j\}} \eta_l(\bx)^{y_l}(1 - \eta_l(\bx))^{1 - y_l}.
$$
One can easily notice the relation between these probabilities: 
$$
\prob(\by'| \bx) = \eta_i(\bx)(1 - \eta_j(\bx)) q_{i,j} \quad \textrm{and} \quad 
\prob(\by''| \bx) = (1 - \eta_i(\bx)) \eta_j(\bx)q_{i,j},
$$
where $q_{i,j} = \prod_{l \in \mathcal{L} \setminus \{i,j\}}\eta_l(\bx)^{y_l} (1 - \eta_l(\bx))^{1 - y_l} \ge 0$.
Consider now the difference of these two probabilities: 
\begin{eqnarray*}
\prob(\by'| \bx) - \prob(\by''| \bx) &=&  \eta_i(\bx)(1 - \eta_j(\bx)) q_{i,j} - (1 - \eta_i(\bx)) \eta_j(\bx)q_{i,j}\\
&=& q_{i,j}( \eta_i(\bx)(1 - \eta_j(\bx)) - (1 - \eta_i(\bx))\eta_j(\bx) ) \\
&=& q_{i,j}(\eta_i(\bx) - \eta_j(\bx)).
\end{eqnarray*}
From the above we see that $\eta_i(\bx) \ge \eta_j(\bx) \Rightarrow  \prob(\by'| \bx) \ge \prob(\by''| \bx)$.
Due to the properties of the bijection $b_{i,j}$, the number of positive labels in $\by'$ and $\by''$ is the same and $N(\by') = N(\by'')$, therefore we also get $\eta_i(\bx) \ge \eta_j(\bx) \Rightarrow \sum_{\by : \mathcal{S}^{1,0}_{i,j}}{ N(\by) \prob(\by | \bx) }  \ge  \sum_{\by : \mathcal{S}^{0,1}_{i,j}}{ N(\by) \prob(\by | \bx) }$, which by (\ref{eqn:eta_i}) and (\ref{eqn:eta_j}) gives us finally $\eta_i(\bx) \ge \eta_j(\bx) \Rightarrow \eta_i'(\bx) \ge \eta_j'(\bx)$.

The implication in the other side, i.e., $\eta_i(\bx) \ge \eta_j(\bx) \Leftarrow  \prob(\by'| \bx) \ge \prob(\by''| \bx)$ holds obviously for $q_{i,j} > 0$. 
For $q_{i,j} = 0$, we can notice, however, that $\prob(\by'| \bx)$ and $\prob(\by''| \bx)$ do not contribute to the appropriate sums as they are zero, and therefore we can follow a similar reasoning as above, concluding that $\eta_i(\bx) \ge \eta_j(\bx) \Leftarrow \eta_i'(\bx) \ge \eta_j'(\bx)$. 

Thus for conditionally independent labels, the order of labels induced by marginal probabilities $\eta_j(\bx)$ is equal to the order induced by $\eta_j'(\bx)$.
As the precision@$k$ is optimized by $k$ labels with the highest marginal probabilities, we have  
that prediction consisted of $k$ labels with highest $\eta_j'(\bx)$ has zero regret for precision@$k$.
\end{proof}

\section{Huffman codes for PLTs}
\label{App:OptComp}
In this section we analyze computational cost of \Algo{PLT}s in binary case, i.e. every inner node has two children. We define the cost of a tree as the total expected fraction of instances which are used in the inner nodes and show that for the multi-class case minimization of this cost coincides with minimization of Huffman criteria. However, in the multi-label case, this does no longer hold. 

We shall use prefix codes, as it is introduced in Section \ref{sec:hsm}, to identify a path from the root to the leaf. Accordingly, a prefix code $\bz = (z_1, \dots , z_\ell ) \in \cal{C}$ determines a path with length $| \bz | = \ell$. The probability to observe a (possibly partial) prefix code is $p_{\bz^i} = \prob ( \bz^i ) $ with respect to the data distribution. If we are given a data that consists of $n$ instances, then the expected number of positive instances that is used to train the node classifier in node $\bz^i$ is $np_{\bz^i}$. Note that in the training process of PLT, only the positive instances shows up in the training data of a child node of $\bz^i$, thus one can define the expected computational cost of a PLT with fixed structure, thus with fixed set of prefix code $\calC$, as
\begin{align}
\sum_{\bz \in \calC} \sum_{i=1}^{| \bz |} p_{\bz^{i-1}} \label{Mishas_cost}.
\end{align}
The next proposition defines the relation of (\ref{Mishas_cost}) and Huffman coding. Huffman coding can be naturally defined in the multi-class case using $\prob (y_i = 1 )$ as weights.
\begin{proposition}
Huffman code minimizes the expected computational cost that is given in (\ref{Mishas_cost}) among binary codes if the data is multi-class.
\end{proposition}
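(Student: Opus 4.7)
The plan is to reduce (\ref{Mishas_cost}) to a form that invites Huffman's classical optimality theorem, and then run the standard exchange-plus-induction template. First I would interchange the order of summation. Each term $p_{\bz^{i-1}}$ in the double sum is indexed by a leaf $\bz$ and a position $i$, which together pick out an internal node $v=\bz^{i-1}$; conversely, for a fixed internal node $v$ this pair arises once per leaf in the subtree rooted at $v$, with $i$ forced to be the depth of $v$ plus one. Writing $L(v)$ for the set of leaves below $v$, this gives
$$
\sum_{\bz \in \calC}\sum_{i=1}^{|\bz|} p_{\bz^{i-1}} \;=\; \sum_{v \text{ internal}} |L(v)|\,p_v,
$$
and in the multi-class case $p_v = \sum_{\bz \in L(v)} p_{\bz}$ with $\sum_{\bz \in \calC} p_{\bz}=1$, so the cost becomes a purely tree-structural functional of the leaf probabilities.

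Next I would establish the Huffman \emph{sibling property}: there exists a cost-minimizing binary tree in which the two labels of smallest probability are sibling leaves at maximum depth. The argument is by exchange. Suppose in an optimal tree some low-probability leaf is shallow while a higher-probability leaf sits at a maximum-depth sibling position; swap them and track the change. Only the marginals $p_v$ of ancestors strictly below the lowest common ancestor of the two leaves are affected, so using the reformulation above the change in cost is
$$
\Delta \text{cost} \;=\; (p_{\text{high}}-p_{\text{low}})\!\left(\sum_{v\in A_{\text{low}}}|L(v)|-\sum_{v\in A_{\text{high}}}|L(v)|\right),
$$
where $A_{\text{low}}$ and $A_{\text{high}}$ collect the unique ancestors of the low- and high-probability leaves, respectively. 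One must verify that this quantity is nonpositive, i.e.\ that moving a small probability into the deeper sibling position cannot increase the cost.

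Having established the sibling property, the proof concludes by induction on the number of labels $m$. Merge the two smallest-probability labels into one super-label whose weight equals the sum of their probabilities, invoke the induction hypothesis on the resulting $(m-1)$-label problem to obtain a Huffman-optimal tree for it, and finally split the super-label back into a sibling pair at the corresponding leaf position. By construction this yields precisely the Huffman tree for the original probabilities, and its optimality follows from the sibling property together with the inductive optimality of the reduced tree.

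The main obstacle is the exchange step. Unlike the classical Huffman cost $\sum_v p_v = \sum_{\bz} p_{\bz}|\bz|$, the cost (\ref{Mishas_cost}) carries the extra factor $|L(v)|$, so monotonicity with respect to depth is not automatic: one can engineer tree structures in which the sum of ancestor subtree-sizes for a shallow leaf exceeds that for a deep leaf. The key point is that the exchange is performed against a \emph{maximum-depth sibling pair}, and along such a path the ancestor subtree-size sum dominates those of any strictly shallower leaf in the tree; making this domination rigorous, and combining it with the multi-class normalization $\sum_{\bz}p_{\bz}=1$, is the technical heart of the proof.
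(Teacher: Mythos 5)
Your reduction of the cost to $\sum_{v\,\mathrm{internal}} |L(v)|\,p_v$ is a faithful literal reading of the double sum, but the argument you build on it breaks exactly where you flag it: the exchange inequality (that swapping a small-probability shallow leaf into a maximum-depth sibling position cannot increase the cost) is never proved, and it cannot be proved, because for the leaf-count-weighted functional the proposition itself is false. Take $m=4$ classes with probabilities $0.49, 0.25, 0.13, 0.13$. The Huffman tree is the caterpillar (depths $1,2,3,3$), whose cost under your functional is $4\cdot 1 + 3\cdot 0.51 + 2\cdot 0.26 = 6.05$, whereas \emph{any} balanced tree costs $4\cdot 1 + 2(p_U+p_V) = 6$ no matter how the four leaves are paired into the two depth-one subtrees $U,V$. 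So Huffman is strictly suboptimal for $\sum_v |L(v)|\,p_v$, and no sibling-property-plus-induction argument in the style of Huffman's classical proof can be completed for it; the ``domination'' along a deepest path that you identify as the technical heart simply does not hold in general.

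The paper's proof rests on a different accounting of the cost, under which the statement is true and nearly immediate. Each node's classifier is trained on the instances reaching its parent, so an inner node $v$ is charged once per \emph{child}, not once per descendant leaf; the relevant quantity is (up to the root's additive constant) $\sum_{v\,\mathrm{internal}} \deg(v)\,p_v$, which is what the double sum in (\ref{Mishas_cost}) is intended to capture, each pair $(\bz,i)$ standing for the node $\bz^i$ trained on its parent's instances. Using the multi-class identity that $p_v$ equals the sum of the probabilities of the leaves below $v$, this telescopes into $1+\sum_{\bz\in\calC} p_{\bz}\sum_{i=1}^{|\bz|}\deg(\bz^{i})$, which for binary codes equals $1+2\sum_{\bz\in\calC}p_{\bz}|\bz|$ --- an increasing affine function of the classical Huffman criterion $\sum_{\bz}p_{\bz}|\bz|$ --- so Huffman's classical optimality theorem finishes the proof with no exchange argument at all. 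If you want to repair your write-up, the fix is not a sharper exchange lemma but the correct weighting: replace $|L(v)|$ by $\deg(v)$ (equal to $2$ in the binary case), after which your first display already reduces the claim to the textbook Huffman statement.
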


\begin{proof}
Recall that the Huffman code minimizes the following criterion:
\[
\sum_{\bz \in \calC } p_{\bz} | \bz |.
\]
In multi-class setting score $p_{\bz^{i}}$ of each inner node $\bz^{i}$ equals to the sum of scores of its children, and the objective function~given in (\ref{Mishas_cost}) can be rewritten as 
\[
\sum_{\bz \in \calC} \sum_{i=1}^{| \bz |} p_{\bz^{i-1}} = 1 + \sum_{ \bz \in \calC } p_{\bz} \sum_{i=1}^{| \bz |} \mbox{deg} ( \bz^i ) ,
\]
where $\mbox{deg} ( \bz^i ) = \# \{ c : \hat{z}_{i+1} = c, \bz^{i} = \hat{\bz}^{i}, \hat{\bz}\in \calC\}$ is the number of children of the node $\bz^{i}$.
In the case of binary codes $\mbox{deg}(\bz^i)=2$ and
\[
1 + \sum_{ \bz \in \calC } p_{\bz} \sum_{i=1}^{| \bz |} \mbox{deg} ( \bz^i ) = 1 + 2\sum_{ \bz \in \calC } p_{\bz} |\bz |
\]
which completes the proof.
\end{proof}

Note that \cite{Grave_et_al_2017}  considered similar notion of computational cost and optimized in a restricted setup, assuming that the tree depth is at most two. Of course, in practice, the $p_{\bz^i}$ values are not known, but are estimated based on observations from the data distribution. Let us denote an estimate $\hat{p}_{\bz^i}$  of $p_{\bz^i}$. An interesting question to address is that to what extent the empirical computational cost $\sum_{\bz \in \calC} \sum_{i=1}^{| \bz |} \hat{p}_{\bz^{i-1}}$ concentrates around its expected values, and on what parameter of the code/tree structure does depend on.

For multi-label case score of an inner node cannot be represented by a sum of children scores, and this result does not hold.

\section{Pseudocode of PLTs}
\label{app:pseudocode}

The pseudocode below presents the training and prediction procedures of \Algo{PLT}s in detail. \Algo{PLT}s can be trained either in the online or batch mode. Algorithm~\ref{alg:plt-online-learning} follows the former mode used, for example, in \Algo{XT}, the state-of-the-art variant of \Algo{PLT}s described in  Section~\ref{sec:plt-tagging}. Note that learning of feature embeddings used in \Algo{XT} is not included in the pseudocode. 

In Algorithm~\ref{alg:plt-topk-prediction} we present an inference algorithm, which uses the uniform-cost search for finding the top $k$ labels. 
The algorithm searches the tree by starting from the root node $\bz^0$. It uses a priority queue $Q$ to store pairs of the node $\bz^i$ and its probability estimate $\heta_{\bz^i}$. The algorithm pops the element with the highest probability estimate from the queue. If the element is a leaf we add the corresponding label to the final prediction. Otherwise, the algorithm estimates probabilities of the children nodes and adds them into the queue. Once we found the $k$-th label or we run out of the pairs in the queue, we stop the search procedure.


\algnewcommand{\IIf}[1]{\State\algorithmicif\ #1\ \algorithmicthen}
\algnewcommand{\IElse}[1]{\State\algorithmicelse\ #1\ }
\algnewcommand{\IElseIf}[1]{\State\algorithmicelse \algorithmicif\ #1\  \algorithmicthen}
\algnewcommand{\EndIIf}{\unskip\ \algorithmicend\ \algorithmicif}

\newcommand{\Func}[1]{\mathrm{#1}}
\newcommand{\HLFunc}[1]{\mathrm{\textbf{#1}}}
\newcommand{\hby}{\hat{\by}}

\begin{algorithm}[h]
	\caption{Incremental learning of a \Algo{PLT}:}
	\label{alg:plt-online-learning}
	\begin{algorithmic}[1]
	    \Statex \textbf{Input:}
		    \Statex $T$: a label tree with $t$ nodes
		    \Statex $A_{\textrm{online}}$: an incremental learning algorithm
		    \Statex $\mathcal{D}_N$: a set of training examples $(\bx, \by)$
        \Statex

		\Statex \textbf{Output:}
		    \Statex $\mathcal{F}_t = \{f_{\bz^i}\}^t$: a set of $t$ node (binary) classifiers
        \Statex

		\State $\mathcal{F}_t = \emptyset$
		\For{each node $\bz^i \in T$} \Comment{Initialization of binary classifiers}
            \State $\mathcal{F}_t \leftarrow \mathcal{F}_t \cup$ new classifier $f_{\bz^i}$
		\EndFor
		
		\For{each training example $(\bx, \by) \in \mathcal{D}_N$}
		    
		    \If{$\textstyle \sum_{j=1}^m y_j = 0$} \Comment{Select nodes for the positive and negative update}
		        \State $\mathcal{Z}_{\textrm{positive}} \leftarrow \emptyset$
		        \State $\mathcal{Z}_{\lnot \textrm{positive}} \leftarrow \bz^0$
		    \Else
		        \For{each $y_j = 1, j \in \{1, \ldots, m\}$}
		            \State $\bz \leftarrow$ encode $j$
		            \State $\mathcal{Z}_{\textrm{positive}} \leftarrow \mathcal{Z}_{\textrm{positive}} \cup \Big\{ \bigcup \limits_{i = 0} \limits^{l} \bz^i$ \Big\} 
		        \EndFor
		        
		        \For{each $\bz^i \in \mathcal{Z}_{\textrm{positive}}$}
		            \State $\mathcal{Z}_{\lnot \textrm{positive}} \leftarrow \mathcal{Z}_{\lnot \textrm{positive}} \cup \Big\{ \bigcup \limits_{z_i} (\bz^{i-1},z_i) \Big\} \setminus \mathcal{Z}_{\textrm{positive}}$ 
		        \EndFor
		    \EndIf
		
		\For{each node $\bz^i \in \mathcal{Z}_{\textrm{positive}}$} \Comment{Update node classifiers}
		    \State $f_{\bz^i} \leftarrow A_{\textrm{online}}(f_{\bz^i}, \bx, 1)$
		\EndFor
		
		\For{each node $\bz^i \in \mathcal{Z}_{\lnot \textrm{positive}}$}
		    \State $f_{\bz^i} \leftarrow A_{\textrm{online}}(f_{\bz^i}, \bx, 0)$
		\EndFor
		
		\EndFor		
		\State \textbf{return} $\mathcal{F}_t$.
	\end{algorithmic}
\end{algorithm} 

\begin{algorithm}[h!]
	\caption{Top-k prediction with a \Algo{PLT}:}
	\label{alg:plt-topk-prediction}
	\begin{algorithmic}[1]
		\Statex \textbf{Input:}
		    \Statex $T$: a label tree with $t$ nodes
		    \Statex $\mathcal{F}_t = \{f_{\bz^i}\}^t$: a set of $t$ node (binary) classifiers
		    \Statex $\bx$: a test example
		    \Statex $k$: a size of prediction
        \Statex

        \Statex \textbf{Output:}
		    \Statex $\hby$: a vector with top $k$ labels for the test example
        \Statex
		
		\State $\hby  \leftarrow \{0\}^m$
		\State $Q \leftarrow \textsc{PriorityQueue}()$ \Comment{Initialization of priority queue}
		\State $\Func{add}\big(Q,(1, \bz^0)\big)$
		\While{$Q \ne \emptyset$ and $\textstyle \sum_{j=1}^m \hy_j < k$} \Comment{Check if $k$ labels not found}
		    \State $\big(\heta_{\bz^i}, \bz^i \big) \leftarrow \Func{pop}(Q)$
            \If{$i \in \calC$} \Comment{Check if leaf node reached}
                \State $j \leftarrow$ decode $\bz^i$
	            \State $\hy_j \leftarrow 1$
            \Else
                \For{each $z_{i+1}$} \Comment{For each node's children}
                    \State $\bz^{i+1} \leftarrow (\bz^i, z_i)$
                    \State $\Func{add}(Q,(\heta_{\bz^i} \cdot \hprob_{f_{\bz^i}}(z_{i+1} \given \bz^i, \bx), \bz^{i+1}))$
                \EndFor
            \EndIf
        \EndWhile
		\State \textbf{return} $\hby$.
	\end{algorithmic}
\end{algorithm}


\section{Additional experimental results}

\subsection{Comparison of PLTs and HSM on synthetic data}
\label{sec:empirical-synthetic}


In this section we validate our theoretical results presented in Section~\ref{sec:pick-one-label}, which show that  \Algo{HSM} is not amenable to model the marginal probabilities in general for multi-label problems. In this case, \Algo{HSM} with pick-one-label heuristic should be outperformed in terms of precision@$k$ by \Algo{PLT}s which are consistent for this performance measure. To validate this claim empirically, we compare the performance of \Algo{PLT}s and \Algo{HSM} on synthetic datasets of three types: multi-label data with independent labels, multi-label data with conditionally dependent labels and multi-class data.

All synthetic models are based on linear models parametrized by a weight vector $\bw$ of size $d$. The values of the vector are sampled uniformly from a $d$-dimensional sphere of radius 1. Each instance $\bx$, in turn, is represented as a vector sampled from a $d$-dimensional disc of the same radius. 


\paragraph{Multi-class distribution.}

We associate a weigh vector $\bw_j$ with each label $j \in \{1, \ldots, m\}$. The model assigns probabilities to labels at point $\bx$ based on the softmax schema
\begin{equation}
\eta_j(\bx) = \frac{\exp(c\bw_j^\top \bx)}{ \sum_{j' = 1}^m {\exp(c\bw_{j'}^\top \bx)}} 
\label{eq:model_multiclass}
\end{equation}
and draws the positive label according to this probability distribution over labels. Scaling factor $c$ is added to control noise in the model. Higher values of $c$ give less noisy model.


\paragraph{Multi-label distribution with conditionally independent labels.}

The model is similar to the previous one used for the multi-class distribution. The difference lays is normalization as the marginal probabilities do not have to sum up to 1. 
To get a probability of the $j$-th label, we use the logistic transformation:
$$
\eta_j(\bx) = \frac{\exp(\bw_j^\top \bx)}{1 + \exp(\bw_j^\top \bx)}.
$$
Then, we assign a label to an instance based on:
$$
y_j = \assert{ r < \eta_j(\bx) },
$$
where the random value $r$ is sampled uniformly and independently from range $[0,1]$ for each instance $\bx$ and label $j \in \{1, \ldots, m\}$.


\paragraph{Multi-label distribution with conditionally dependent labels.}

To model conditionally dependent labels we use the {mixing matrix} model based
on latent scoring functions generated by  $\mW = (\bw_1, \ldots, \bw_m)$.
The $m \times m$ mixing matrix $\mM$ introduces dependencies between noise $\boldsymbol{\epsilon}$, which stands for the source of randomness in the model. The models $\bw_j$ are sampled from a sphere of radius 1, as in previous cases. The values in the mixing matrix $\mM$ are sampled uniformly and independently from $[-1, 1]$. The random noise vector $\boldsymbol{\epsilon}$ is sampled from $N(0, 0.25)$. The label vector $\by$ is then obtained by element-wise evaluation of the following expression:
$$
\by =  \assert {\mM(\mW^\top\bx + \boldsymbol{\epsilon}) > 0} 
$$
Notice that if $\mM$ was an identity matrix the model would generate independent labels.



\begin{table}[h]
\caption{Means and standard deviations of 50 runs of each experiment. The p-values on the right indicate the significance of the observed differences. The results of \Algo{HSM} and \Algo{PLT}s on multi-class data were always equal.}
\label{tab:synthetic-full}
\centering
\small
\begin{tabular}{l|rr|rr|rrr}
\toprule
 & \multicolumn{2}{c}{HSM} & \multicolumn{2}{|c}{PLT} & \multicolumn{3}{|c}{p-values} \\
 & \multicolumn{1}{l}{mean} & \multicolumn{1}{l}{stdev} & \multicolumn{1}{|l}{mean} & \multicolumn{1}{l}{stdev} & \multicolumn{1}{|l}{t-test} & \multicolumn{1}{l}{sign} & \multicolumn{1}{l}{Wilcoxon} \\
 \midrule
multi-class & 21.90	& 2.74 & 21.90 & 2.74 \\ 
multi-label independent  & 32.57 & 0.34 & 32.58 & 0.33 & 0.4367 & 0.3222 & 0.5980 \\
multi-label dependent   & 70.68 & 5.73 & 71.68 & 5.65 & 9.80E-14 & 3.71E-11 & 3.90E-09 \\
\bottomrule
\end{tabular}
\end{table}

\paragraph{Experimental setting.} 

\Algo{PLT}s and \Algo{HSM} are usually implemented as online learning algorithms, i.e., the node classifiers are updated in an incremental way, example by example. To minimize the impact of the hyperparameter tuning of online algorithms, we have decided to implement batch versions of both algorithms using the \Algo{LibLinear}-based~\citep{liblinear} logistic regression. The sets of training instances for each node classifier are appropriately constructed by taking the corresponding conditioning of the probabilistic models into account. In the case of \Algo{HSM} with the pick-one-label heuristic, we first transform each multi-label example to $s$ multi-class copies of it, one copy for each its label. Each such copy gets then a weight of $1/s$. Such transformation should be concordant with the theoretical model~(\ref{eq:heuristic}).


In the experiments we used the following parameters of the synthetic models: $d = 3$ (i.e., the number of features), $n = 100000$ instances (split $1:1$ for training and test subsets), and $m = 32$ labels or classes. In the case of the multi-class model we report results with the scaling factor $c=10$. The choice of $c$ does not change the interpretation of the results. 
To train logistic regression we use a fixed value of the regularization parameter, standing for a very weak regularization.  For all experiments we report the results in terms of precision$@1$.

\paragraph{Observations.}

Table~\ref{tab:synthetic-full} presents the average results of all experiments besides with the standard deviation of obtained values. As expected the performance of \Algo{PLT}s and \Algo{HSM} on the multi-class data are exactly the same. For the other models, we additionally include the p-values of statistical tests run to verify, whether there is a significant difference in performance between \Algo{PLT}s and \Algo{HSM} with the pick-one-label heuristic. 
In the case of the multi-label data with conditionally independent labels the test shows that there is no evidence to reject the hypothesis that the performance of \Algo{PLT}s and \Algo{HSM} is the same. In the case of multi-label data with conditionally dependent labels, the statistical tests show that \Algo{PLT}s are significantly better than \Algo{HSM} with the pick-one-label heuristic.

\subsection{Comparison of PLTs and HSM on benchmark data}
\label{sec:empirical-benchmark}

In Table~\ref{tab:hsm-vs-plt} we present similar results, but obtained on the benchmark datasets. We use two models. The first one follows the sparse representation and uses an implementation of \Algo{PLT}s and \Algo{HSM} in \Algo{Vowpal Wabbit}~\citep{Langford_et_al_2007}. The second one is based on \Algo{fastText} and produces the dense representation. In all models we use Huffman trees. The results clearly indicate the better performance of \Algo{PLT}s over \Algo{HSM}.

\begin{table*}[ht!]
        \caption{Precision@$k$ with $k=\{ 1,3,5\}$ of a simple \Algo{HSM} and a simple \Algo{PLT} implementaions. }
        \label{tab:hsm-vs-plt}
        \begin{center}
                \begin{footnotesize}
                        \vspace{-0.1cm}
						\resizebox{\textwidth}{!}{
                        \begin{tabular}{l | r@{\hskip 0pt} | r@{\hskip 0pt} | r@{\hskip 0pt} | r@{\hskip 0pt} | l | r@{\hskip 0pt} | r@{\hskip 0pt} | r@{\hskip 0pt} | r@{\hskip 0pt} }
                                \toprule
	& \multicolumn{2}{|c|}{\Algo{Vowpal Wabbit}} & \multicolumn{2}{|c|}{\Algo{fastText}} & & \multicolumn{2}{|c|}{\Algo{Vowpal Wabbit}} & \multicolumn{2}{|c}{\Algo{fastText}} \\
	\multicolumn{1}{c|}{Dataset} & \multicolumn{1}{|c|}{\Algo{HSM}} & \multicolumn{1}{|c}{\Algo{PLT}} & \multicolumn{1}{|c|}{\Algo{HSM}} & \multicolumn{1}{|c|}{\Algo{PLT}} & 
	\multicolumn{1}{c|}{Dataset} 
        & \multicolumn{1}{|c|}{\Algo{HSM}} & \multicolumn{1}{|c}{\Algo{PLT}} & \multicolumn{1}{|c|}{\Algo{HSM}} & \multicolumn{1}{|c}{\Algo{PLT}} \\ \midrule
	\textbf{EUR-Lex} & \data{56.98}{46.99}{39.09} & \databf{74.55}{60.60}{50.05} & \data{66.39}{54.05}{44.73} & \databf{73.19}{57.79}{46.98} & 
	\textbf{AmazonCat-13K} & \data{86.69}{72.00}{57.97} & \databf{91.46}{76.00}{61.40} & \data{90.18}{72.53}{56.20} & \databf{92.98}{75.75}{59.53} \\ \midrule
	\textbf{Wiki-30K} & \data{70.20}{60.11}{53.17} & \databf{84.34}{72.34}{62.72} & \data{83.02}{69.66}{59.50} & \databf{85.11}{73.12}{62.67} & 
	\textbf{Delicious-200K} & \data{41.58}{33.24}{28.04} & \databf{45.27}{38.95}{35.59} & \data{42.17}{37.94}{35.77} & \databf{46.98}{40.99}{38.04} \\ \midrule
	\textbf{WikiLSHTC-325K} & \data{36.90}{22.30}{16.60} & \databf{41.63}{26.78}{20.39} & \data{41.28}{24.68}{18.08} & \databf{41.78}{24.96}{18.53} & 
    \textbf{Amazon-670K} & \data{33.64}{28.58}{25.01} & \databf{36.85}{32.48}{29.15} & \data{25.04}{21.06}{18.28} & \databf{26.18}{22.76}{20.29} \\
								\bottomrule
                        \end{tabular}}
                \end{footnotesize}
        \end{center}
\end{table*}

\subsection{The ablation analysis on benchmark datasets}
\label{app:ablation-analysis}

Table~\ref{tab:ablation-analysis} contains results of the ablation analysis in which we compare different components of the \Algo{XT} algorithm. We analyze the influence of the Huffman tree vs. top-down clustering, the simple averaging of features vectors vs. the TF-IDF-based weighting, and no regularization vs. L2 regularization. For every configuration, we conducted a grid search of hyperparameters from ranges reported in Appendix~\ref{sec:hyper}. The results clearly show that the components need to combined together to obtain the best results. The best combination is usually the one that uses top-down clustering, TF-IDF-based weighting, and L2 regularization. It is worth to notice that top-down clustering alone gets worse results than Huffman trees with TF-IDF-based weighting and L2 regularization. 




\begin{table*}[h]
\vspace{-.4cm}
        \caption{\small Precision@$k$ scores with $k=\{ 1,3,5\}$ of different variants of \Algo{PLT} in \Algo{fastText} (\Algo{extremeText})}
        \label{tab:ablation-analysis}
        \begin{center}
                \vspace{-.2cm}
                \begin{footnotesize}
                        \resizebox{\textwidth}{!}{
                        \begin{tabular}{l | l@{\hskip 0pt} | l@{\hskip 0pt} | l@{\hskip 0pt} | l@{\hskip 0pt} | l@{\hskip 0pt} || l@{\hskip 0pt} | l@{\hskip 0pt} | l@{\hskip 0pt} | l@{\hskip 0pt} }
                                \toprule
\multicolumn{1}{c|}{Dataset} 
& \multicolumn{1}{|c|}{Metrics} 
& \multicolumn{1}{|c|}{Huff.}
& \multicolumn{1}{|c|}{\datatwo{Huff. +}{TF-IDF}}
& \multicolumn{1}{|c|}{Huff. + L2}
& \multicolumn{1}{|c||}{\datatwo{Huff. + L2}{+ TF-IDF}}
& \multicolumn{1}{|c|}{Clus.}
& \multicolumn{1}{|c|}{\datatwo{Clus. +}{TF-IDF}}
& \multicolumn{1}{|c|}{Clus. + L2}
& \multicolumn{1}{|c}{\datatwo{Clus. + L2}{+ TF-IDF}} \\

\midrule

\textbf{Eurlex}
& \data{P@1}{P@3}{P@5}
& \data{63.39}{50.48}{41.19}
& \data{71.20}{57.26}{46.93}
& \data{62.79}{48.99}{40.16}
& \data{74.60}{60.36}{49.72}
& \data{68.31}{54.62}{45.23}
& \data{75.05}{61.65}{50.99}
& \data{65.05}{50.87}{41.71}
& \data{\textbf{77.68}}{\textbf{63.37}}{\textbf{52.85}} \\
\midrule

\textbf{AmazonCat-13K}
& \data{P@1}{P@3}{P@5}
& \data{90.10}{72.67}{57.69}
& \data{89.19}{72.81}{58.30}
& \data{90.84}{73.10}{57.69}
& \data{91.08}{75.27}{60.34}
& \data{72.95}{61.66}{48.38}
& \data{77.13}{65.76}{52.72}
& \data{91.73}{75.07}{59.63}
& \data{\textbf{92.43}}{\textbf{77.65}}{\textbf{62.74}} \\
\midrule

\textbf{Wiki-30K}
& \data{P@1}{P@3}{P@5}
& \data{78.04}{63.31}{52.65}
& \data{78.14}{67.47}{56.00}
& \data{82.28}{68.75}{58.33}
& \data{\textbf{85.23}}{\textbf{73.52}}{\textbf{63.71}}
& \data{78.40}{65.62}{55.61}
& \data{78.69}{66.40}{56.77}
& \data{82.13}{69.22}{58.69}
& \data{85.21}{73.18}{63.39} \\
\midrule

\textbf{Delicious-200K}
& \data{P@1}{P@3}{P@5}
& \data{45.71}{40.69}{38.02}
& \data{47.24}{40.88}{37.74}
& \data{46.48}{41.38}{38.78}
& \data{\textbf{47.85}}{\textbf{42.08}}{\textbf{39.13}}
& \data{44.58}{40.42}{38.15}
& \data{45.13}{40.63}{38.23}
& \data{46.55}{41.08}{38.25}
& \data{47.31}{41.63}{38.88} \\
\midrule

\textbf{WikiLSHTC-325K}
& \data{P@1}{P@3}{P@5}
& \data{36.23}{20.60}{14.70}
& \data{38.28}{22.43}{16.33}
& \data{41.10}{25.62}{19.10}
& \data{42.83}{26.33}{19.45}
& \data{34.39}{21.17}{15.09}
& \data{39.66}{24.79}{18.28}
& \data{54.95}{36.42}{27.25}
& \data{\textbf{58.73}}{\textbf{39.24}}{\textbf{29.26}} \\
\midrule

\textbf{Wiki-500K}
& \data{P@1}{P@3}{P@5}
& \data{41.01}{24.79}{18.64}
& \data{40.62}{25.68}{19.74}
& \data{41.55}{25.94}{19.79}
& \data{49.80}{32.39}{24.62}
& \data{46.63}{31.59}{24.30}
& \data{49.20}{34.52}{26.83}
& \data{56.04}{38.87}{30.46}
& \data{\textbf{64.48}}{\textbf{45.84}}{\textbf{35.46}} \\
\midrule

\textbf{Amazon-670K}
& \data{P@1}{P@3}{P@5}
& \data{23.19}{19.80}{17.48}
& \data{29.70}{25.84}{22.96}
& \data{21.64}{18.11}{15.83}
& \data{32.11}{27.77}{24.64}
& \data{28.54}{25.52}{23.30}
& \data{36.24}{32.15}{29.19}
& \data{28.95}{25.74}{23.22}
& \data{\textbf{39.90}}{\textbf{35.36}}{\textbf{32.04}} \\
                                \bottomrule
                        \end{tabular}}
                \end{footnotesize}
        \end{center}
\end{table*}

\subsection{Tuning of hyperparameters}
\label{sec:hyper}

The \Algo{PLT} has only one global hyperparameter which is the degree of the tree denoted by $b$.
The other hyperparameters are associated with the node classifiers.
The \Algo{HSM} and \Algo{PLT} in Vowpal Wabbit was tuned with the stochastic gradient descent with a step size $\eta_t$ calculated separately for each node
according to $\eta_t = \eta \times (1 / t)^p$ where $t$ is number of node updates and $\eta$ and $p$ are hyperparameters.
In \Algo{fastText}-based methods, \Algo{HSM}, \Algo{Learned Tree} and \Algo{XT}, $\eta_t$ decreased linearly during training from $\eta$ to 0.0. 
In the \Algo{XT} the optimization methods has been extended by L2 regularization, so it has one additional parameter. Balanced k-means clustering used to build a tree in \Algo{XT} has also a stopping parameter $\epsilon$ set by default to $0.001$.  

\vspace{\tableBefore}
\begin{table}[ht!]
\caption{The hyperparameters of the \Algo{HSM} and \Algo{PLT} methods and their ranges used in hyperparameter optimization. Notation: $b$ -- tree arity, $\eta$ -- learning rate}
\label{tab:hyppar-hsm-plt}
\begin{center}
\begin{tabular}{c|c}
\hline
Hyperparameter & Range \\
\hline
$b$ & $\{ 2, \dots , 32\}$ \\
$\eta$ & $[0.0001 - 1.0]$ \\
number of epochs & $\{ 20, 30, 40\}$ \\
\hline
\end{tabular}
\end{center}
\end{table}
\vspace{\tableAfter}

\vspace{\tableBefore}
\begin{table}[ht!]
\caption{The hyperparameters of the \Algo{fastText} and \Algo{Learned Tree} methods and their ranges used in hyperparameter optimization. Notation: $b$ -- tree arity, $\eta$ -- learning rate}
\label{tab:-ft}
\begin{center}
\begin{tabular}{c|c}
\hline
Hyperparameter & Range \\
\hline
$b$ & $\{ 2, \dots , 32\}$ \\
$\eta$ & $[0.0001 - 1.0]$ \\
number of epochs & $\{ 20, 30, 40\}$ \\
dim & $\{ 500\}$ \\
\hline
\end{tabular}
\end{center}
\end{table}
\vspace{\tableAfter}

\vspace{\tableBefore}
\begin{table}[ht!]
\caption{The hyperparameters of the \Algo{XT} method and their ranges used in hyperparameter optimization. Notation: $b$ -- tree arity (number of centroids used in k-means clustering), $\epsilon$ -- stoping condition of k-means clustering, $\eta$ -- learning rate}
\label{tab:hyppar-xt}
\begin{center}
\begin{tabular}{c|c}
\hline
Hyperparameter & Range \\
\hline
$b$ & $\{ 2 \}$ \\
$\epsilon$ & $\{0.001\}$ \\
$\eta$ & $\{0.5, 0.1, 0.05 \}$ \\
L2 regularization & $\{0.001, 0.002, 0.003\}$ \\
number of epochs & $\{ 20, 30, 40 \}$ \\
dim & $\{ 500 \}$ \\
max leaves & $\{ 100 \}$ \\
\hline
\end{tabular}
\end{center}
\end{table}
\vspace{\tableAfter}

\section{Information about the benchmark datasets}
\label{app:benchmark_datasets}

Table~\ref{tbl:benchmark-datasets} contains the basic statistics of the benchmark datasets used in the experiments taken from the Extreme Classification Repository: \url{http://manikvarma.org/downloads/XC/XMLRepository.html}.

\begin{table}[h]
\caption{Statistics of the benchmark datasets.}
\label{tbl:benchmark-datasets}
\begin{center}
\begin{small}
\begin{tabular}{l rrrrrr}
\toprule
Dataset        & \multicolumn{1}{c}{\begin{tabular}[c]{@{}c@{}}Number of \\ features\end{tabular}} & \multicolumn{1}{c}{\begin{tabular}[c]{@{}c@{}}Number of \\ labels\end{tabular}} & \multicolumn{1}{c}{\begin{tabular}[c]{@{}c@{}}Number of \\ train points\end{tabular}} & \multicolumn{1}{c}{\begin{tabular}[c]{@{}c@{}}Number of \\ test points\end{tabular}} & \multicolumn{1}{c}{\begin{tabular}[c]{@{}c@{}}Avg. points \\ per label\end{tabular}} & \multicolumn{1}{c}{\begin{tabular}[c]{@{}c@{}}Avg. labels \\ per point\end{tabular}} \\
\midrule
EURLex-4K &	5000 & 3993 &	15539 &	3809 &	25.73& 	5.31 \\
AmazonCat-13K &	203882	& 13330	& 1186239	& 306782	& 448.57 &	5.04 \\
Wiki10-31K     & 101938  & 30938 & 14146 & 6616   & 8.52 & 18.64  \\
Delicious-200K & 782585 & 205443 & 196606 & 100095 & 72.29 & 75.54 \\
WikiLSHTC-325K & 1617899 & 325056      & 1778351 & 587084 & 17.46 & 3.19 \\
Wikipedia-500K & 2381304 & 501070 & 1813391 & 783743 & 24.75 & 4.77 \\
Amazon-670K    & 135909 & 670091     & 490449 & 153025  & 3.99 & 5.45   \\
\bottomrule 
\end{tabular}
\end{small}
\end{center}
\end{table}

\end{document}